\documentclass[12pt]{article}
\usepackage[utf8]{inputenc}
\usepackage[sectionbib]{natbib}

\title{Compact Autoregressive Network}
\author{Di Wang$^\dagger$, Feiqing Huang$^\dagger$, Jingyu Zhao$^\dagger$, Guodong Li$^\dagger$, Guangjian Tian$^\ddagger$\\
\textit{$\dagger$ Department of Statistics and Actuarial Science, University of Hong Kong}\\
\textit{$\ddagger$ Huawei Noah's Ark Lab, Hong Kong, China}}

\linespread{1.6}

\usepackage[margin=1in]{geometry}
\usepackage{graphicx}
\usepackage{mathrsfs}
\usepackage{multirow}
\usepackage{amsfonts}
\usepackage{amsmath}
\usepackage{comment}
\usepackage{graphicx}
\usepackage{amsthm}
\usepackage{color}
\usepackage{mathtools}
\usepackage{algorithm}
\usepackage{sectsty}
\usepackage{makecell}
\usepackage{amsmath}
\usepackage{amssymb}
\usepackage[toc,page]{appendix}
\usepackage[mathscr]{euscript}
\usepackage[toc]{appendix}

\let\counterwithin\relax
\usepackage{chngcntr}
\usepackage{apptools}
\usepackage{booktabs}
\usepackage{caption}
\usepackage{lscape}
\usepackage{tabularx} %
\usepackage[utf8]{inputenc} %
\usepackage[sectionbib]{natbib} %
\usepackage{array} %
\usepackage{arydshln}
\usepackage{soul}
\usepackage{tikz}
\AtAppendix{\counterwithin{lemma}{section}}

\newtheorem{condition}{Condition}
\newtheorem{definition}{Definition}
\newtheorem{lemma}{Lemma}
\newtheorem{remark}{Remark}

\newtheorem{theorem}{Theorem}

\newcolumntype{P}[1]{>{\centering\arraybackslash}p{#1}} %

\newcommand{\norm}[1]{\left\lVert#1\right\rVert}

\usepackage[mathscr]{euscript}
\usepackage{mathtools}
\usepackage{amsthm}
\usepackage{amsfonts}
\usepackage{amssymb}
\usepackage{subfig}
\usepackage{color}
\usepackage{multirow}
\usepackage{makecell}
\usepackage{booktabs}
\usepackage{tikz}
\newcommand{\bm}{\boldsymbol}
\newcommand{\cm}[1]{\mbox{\boldmath$\mathscr{#1}$}}

\mathtoolsset{showonlyrefs}

 %
\newcommand{\greenline}{\raisebox{2pt}{\tikz{\draw[-,black!40!green,solid,line width = 1pt](0,0) -- (5mm,0);}}}
\newcommand{\blueline}{\raisebox{2pt}{\tikz{\draw[-,blue, dash pattern=on 6pt off 2pt on 6pt off 2pt,line width = 1pt](0,0) -- (5mm,0);}}}
\newcommand{\blackline}{\raisebox{2pt}{\tikz{\draw[-,black, dash pattern=on 1pt off 1.5pt on 8pt off 1.5pt, line width = 1pt](0,0) -- (5mm,0);}}}

\newcommand{\greenlinereal}{\raisebox{2pt}{\tikz{\draw[-, black!40!green,dash pattern=on 6pt off 2pt on 6pt off 2pt,line width = 1pt, line width = 1pt](0,0) -- (5mm,0);}}}
\newcommand{\bluelinereal}{\raisebox{2pt}{\tikz{\draw[-,blue, dash pattern=on 1pt off 1.5pt on 8pt off 1.5pt, line width = 1pt](0,0) -- (5mm,0);}}}
\newcommand{\blacklinereal}{\raisebox{2pt}{\tikz{\draw[-,black,solid,line width = 1pt](0,0) -- (5mm,0);}}}

\bibliographystyle{apalike}

\mathtoolsset{showonlyrefs}

\begin{document}

\setlength{\parindent}{16pt}

\maketitle

\begin{abstract}
	Autoregressive networks can achieve promising performance in many sequence modeling tasks with short-range dependence. However, when handling high-dimensional inputs and outputs, the huge amount of parameters in the network lead to expensive computational cost and low learning efficiency. The problem can be alleviated slightly by introducing one more narrow hidden layer to the network, but the sample size required to achieve a certain training error is still large. To address this challenge, we rearrange the weight matrices of a linear autoregressive network into a tensor form, and then make use of Tucker decomposition to represent low-rank structures. This leads to a novel compact autoregressive network, called Tucker AutoRegressive (TAR) net. Interestingly, the TAR net can be applied to sequences with long-range dependence since the dimension along the sequential order is reduced. Theoretical studies show that the TAR net improves the learning efficiency, and requires much fewer samples for model training. Experiments on synthetic and real-world datasets demonstrate the promising performance of the proposed compact network.
\end{abstract}

\textit{Keywords}: artificial neural network, dimension reduction, sample complexity analysis, sequence modeling, tensor decomposition

\section{Introduction}
Sequence modeling has been used to address a broad range of applications including macroeconomic time series forecasting, financial asset management, speech recognition and machine translation. Recurrent neural networks (RNN) and their variants, such as Long-Short Term Memory \citep{hochreiter1997long} and Gated Recurrent Unit \citep{cho2014learning}, are commonly used as the default architecture or even the synonym of sequence modeling by deep learning practitioners \citep{goodfellow2016deep}.
In the meanwhile, especially for high-dimensional time series, we may also consider the autoregressive modeling or multi-task learning,
\begin{equation}
\bm{\widehat{y}}_{t}=f(\bm{y}_{t-1},\bm{y}_{t-2},\dots,\bm{y}_{t-P}),
\label{eq:multitask}
\end{equation}
where the output $\bm{\widehat{y}}_t$ and each input $\bm{y}_{t-i}$ are $N$-dimensional, and the lag $P$ can be very large for accomodating sequential dependence.
Some non-recurrent feed-forward networks with convolutional or other certain architectures have been proposed recently for sequence modeling, and are shown to have state-of-the-art accuracy. For example, some autoregressive networks, such as PixelCNN \citep{van2016conditional} and WaveNet \citep{oord2016wavenet} for image and audio sequence modeling, are compelling alternatives to the recurrent networks.

This paper aims at the autoregressive model \eqref{eq:multitask} with a large number of sequences.
This problem can be implemented by a fully connected network with $NP$ inputs and $N$ outputs. The number of weights will be very large when the number of sequences $N$ is large, and it will be much larger if the data have long-range sequential dependence.
This will lead to excessive computational burden and low learning efficiency.
Recently, \citet{du2018many} showed that the sample complexity in training a convolutional neural network (CNN) is directly related to network complexity, which indicates that compact models are highly desirable when available samples have limited sizes.

To reduce the redundancy of parameters in neural networks, many low-rank based approaches have been investigated.
One is to reparametrize the model, and then to modify the network architecture accordingly. Modification of architectures for model compression can be found from the early history of neural networks \citep{fontaine1997nonlinear,grezl2007probabilistic}.
For example, a bottleneck layer with a smaller number of units can be imposed to constrain the amount of information traversing the network, and to force a compact representation of the original inputs in a multilayer perceptron (MLP) or an autoencoder \citep{hinton2006reducing}.
The bottleneck architecture is equivalent to a fully connected network with a low-rank constraint on the weight matrix in a linear network.

Another approach is to directly constrain the rank of parameter matrices. For instance, \citet{denil2013predicting} demonstrated significant redundancy in large CNNs, and proposed a low-rank structure of weight matrices to reduce it.
If we treat weights in a layer as a multi-dimensional tensor, tensor decomposition methods can then be employed to represent the low-rank structure, and hence compress the network. Among these works, \citet{lebedev2014speeding} applied the CP decomposition for the 4D kernel of a single convolution layer to speed up CNN, and \citet{jaderberg2014speeding} proposed to construct a low-rank basis of filters to exploit cross-channel or filter redundancy. \citet{kim2015compression} utilized the Tucker decomposition to compress the whole network by decomposing convolution and fully connected layers. The tensor train format was employed in \citet{novikov2015tensorizing} to reduce the parameters in fully connected layers. Several tensor decomposition methods were also applied to compress RNNs \citep{tjandra2018tensor,ye2018learning,pan2019compressing}. In spite of the empirical success of low-rank matrix and tensor approaches in the literature, theoretical studies for learning efficiency are still limited.

A fully connected autoregressive network for \eqref{eq:multitask} will have $N^2P$ weights, and it will reduce to $Nr+NPr$ for an MLP with one hidden layer and $r$ hidden units.
The bottleneck architecture still has too many parameters and, more importantly, it does not attempt to explore the possible compact structure along the sequential order. 
We first simplify the autoregressive network into a touchable framework, by rearranging all weights into a tensor. We further apply Tucker decomposition to introduce a low-dimensional structure and translate it into a compact autoregressive network, 
called Tucker AutoRegressive (TAR) net. It is a special compact CNN with interpretable architecture. 
Different from the original autoregressive network, the TAR net is more suitable for sequences with long-range dependence since the dimension along the sequential order is reduced.

There are three main contributions in this paper:

1. We innovatively tensorize weight matrices to create an extra dimension to account for the sequential order and apply tensor decomposition to exploit the low-dimensional structure along all directions. Therefore, the resulting network can handle sequences with long-range dependence.

2. We provide theoretical guidance on the sample complexity of the proposed network.
Our problem is more challenging than other supervised learning problems owing to the strong dependency in sequential samples and the multi-task learning nature.
Moreover, our sample complexity analysis can be extended to other feed-forward networks.

3. The proposed compact autoregressive network can flexibly accommodate nonlinear mappings, and offer physical interpretations by extracting explainable latent features.

The rest of the paper is organized as follows. Section 2 proposes the linear autoregressive networks with low-rank structures and presents a sample complexity analysis for the low-rank networks. Section 3 introduces the Tucker autoregressive net by reformulating the single-layer network with low-rank structure to a compact multi-layer CNN form. Extensive experiments on synthetic and real datasets are presented in Section 4. Proofs of theorems and detailed information for the real dataset are provided in the Appendix.

\section{Linear Autoregressive Network}

This section demonstrates the methodology by considering a linear version of \eqref{eq:multitask}, 
and theoretically studies the sample complexity of the corresponding network.

\subsection{Preliminaries and Background}

\subsubsection{Notation} We follow the notations in \citet{kolda2009tensor} to denote vectors by lowercase boldface letters, e.g. $\bm{a}$; matrices by capital boldface letters, e.g. $\bm{A}$; tensors of order 3 or higher by Euler script boldface letters, e.g. $\cm{A}$. For a generic $d$\textsuperscript{th}-order tensor $\cm{A}\in\mathbb{R}^{p_1\times\cdots\times p_d}$, denote its elements by $\cm{A}(i_1,i_2,\dots,i_d)$ and unfolding of $\cm{A}$ along the $n$-mode by $\cm{A}_{(n)}$, where the columns of $\cm{A}_{(n)}$ are the $n$-mode vectors of $\cm{A}$, for $n=1,2,\dots,d$. The vectorization operation is denoted by $\text{vec}(\cdot)$. The inner product of two tensors $\cm{A},\cm{B}\in\mathbb{R}^{p_1\times\cdots\times p_d}$ is defined as $\langle\cm{A},\cm{B}\rangle=\sum_{i_1}\cdots\sum_{i_d}\cm{A}(i_1,\dots,i_d)\cm{B}(i_1,\dots,i_d)$. The Frobenius norm of a tensor $\cm{A}$ is defined as $\|\cm{A}\|_{\text{F}}=\sqrt{\langle\cm{A},\cm{A}\rangle}$. The mode-$n$ multiplication $\times_n$ of a tensor $\cm{A}\in\mathbb{R}^{p_1\times\cdots\times p_d}$ and a matrix $\bm{B}\in\mathbb{R}^{q_n\times p_n}$ is defined as
\begin{equation*}
(\cm{A}\times_n\bm{B})(i_1,\dots,j_n,\dots,i_d)
=\sum_{i_n=1}^{p_n}\cm{A}(i_1,\dots,i_n,\dots,i_d)\bm{B}(j_n,i_n),
\end{equation*}
for $n=1,\dots,d$, respectively.
For a generic symmetric matrix $\bm{A}$, $\lambda_{\max}(\bm{A})$ and $\lambda_{\min}(\bm{A})$ represent its largest and smallest eigenvalues, respectively.

\subsubsection{Tucker decomposition} The Tucker ranks of $\cm{A}$ are defined as the matrix ranks of the unfoldings of $\cm{A}$ along all modes, namely $\text{rank}_i(\cm{A})=\text{rank}(\cm{A}_{(i)})$, $i=1,\dots,d$. If the Tucker ranks of $\cm{A}$ are $r_1,\dots,r_d$, where $1\leq r_i\leq p_i$, there exist a tensor $\cm{G}\in\mathbb{R}^{r_1\times\cdots\times r_d}$ and matrices $\bm{U}_i\in\mathbb{R}^{p_i\times r_i}$, such that
\begin{equation}
\label{eq:tucker}
\cm{A}=\cm{G}\times_1\bm{U}_1\times_2\bm{U}_2\cdots\times_d\bm{U}_d,
\end{equation}
which is known as Tucker decomposition \citep{tucker1966some}, and denoted by $\cm{A}=[\![\cm{G};\bm{U}_1,\bm{U}_2,\dots,\bm{U}_d]\!]$. With the Tucker decomposition \eqref{eq:tucker}, the $n$-mode matricization of $\cm{A}$ can be written as
\begin{equation}
\cm{A}_{(n)}=\bm{U}_n\cm{G}_{(n)}(\bm{U}_d\otimes\cdots\otimes\bm{U}_{n+1}\otimes\bm{U}_{n-1}\otimes\cdots\otimes\bm{U}_1)^\top,
\end{equation}
where $\otimes$ denotes the Kronecker product for matrices.

\subsection{Linear Autoregressive Network}

Consider a linear autoregressive network,
\[
\bm{h}_t=\bm{A}_1\bm{y}_{t-1}+\bm{A}_2\bm{y}_{t-2}+\cdots+\bm{A}_{t-P}\bm{y}_{t-P}+\bm{b},
\]
where $\bm{h}_t=\bm{\widehat{y}}_t$ is the output, $\bm{A}_i$s are $N\times N$ weight matrices, and $\bm{b}$ is the bias vector.
Let $\bm{x}_t=(\bm{y}_{t-1}^\top,\dots,\bm{y}_{t-P}^\top)^\top$ be the $NP$-dimensional inputs. We can rewrite it into a fully connected network,
\begin{equation}\label{eq:VAR}
\bm{h}_t=\bm{W}\bm{x}_t+\bm{b},
\end{equation}
for $t=1,\dots,T$, where $\bm{W}=(\bm{A}_1,...,\bm{A}_P)\in \mathbb{R}^{N\times NP}$ is the weight matrix.
Note that $T$ denotes the effective sample size, which is the number of samples for training. In other words, the total length of the sequential data is $T+P$.

To reduce the dimension of $\bm{W}$, a common strategy is to constrain the rank of $\bm{W}$ to be $r$, which is much smaller than $N$. The low-rank weight matrix $\bm{W}$ can be factorized as $\bm{W}=\bm{A}\bm{B}$, where $\bm{A}$ is a $N\times r$ matrix and $\bm{B}$ is a $r\times NP$ matrix, and the fully connected network can be transformed into
\begin{equation}
\label{eq:lowrank}
\bm{h}_t=\bm{A}\bm{B}\bm{x}_t+\bm{b}.
\end{equation}
The matrix factorization reduces the number of parameters in $\bm{W}$ from $N^2P$ to $Nr+NPr$. However, if both $N$ and $P$ are large, the weight matrix $\bm{B}$ is still of large size.

\begin{figure}[t]
	\centering
	\includegraphics[width=\columnwidth]{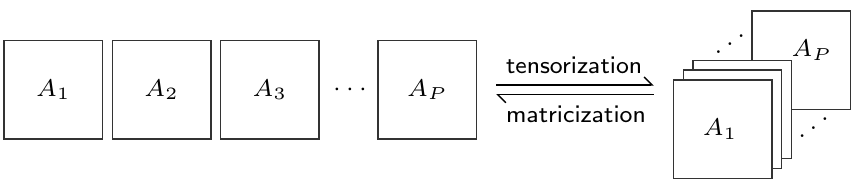}
	\label{fg:tensor}
	\caption{\label{fig1}Rearranging $P$ weight matrices of a linear autoregressive network into a tensor.}
\end{figure}

We alternatively rearrange the weight matrices $\bm{A}_i$s into a 3\textsuperscript{rd}-order tensor $\cm{W}\in\mathbb{R}^{N\times N\times P}$ such that $\cm{W}_{(1)}=\bm{W}$; see Figure \ref{fig1} for the illustration.
The Tucker decomposition can then be applied to reduce the dimension from three modes simultaneously. If the low-Tucker-rank structure is applied on $\cm{W}$ with ranks $r_1,r_2,r_3$, the network becomes
\begin{equation}
\label{eq:lowrankVAR}
\bm{h}_t=\bm{U}_1\cm{G}_{(1)}(\bm{U}_3\otimes\bm{U}_2)^\top\bm{x}_t+\bm{b},
\end{equation}
by Tucker decomposition $\cm{W}=[\![\cm{G};\bm{U}_1,\bm{U}_2,\bm{U}_3]\!]$. The Tucker decomposition further reduces the dimension from the other two modes of low-rank structure in \eqref{eq:lowrank}, while the low-rankness of $\bm{W}$ only considers the low-dimensional structure on the 1-mode of $\cm{W}$ but ignores the possible compact structure on the other two modes.

We train the network based on the squared loss. For simplicity, each sequence is subtracted by its mean, so the bias vector $\bm{b}$ can be disregarded. The weight matrix or tensor in \eqref{eq:VAR}, \eqref{eq:lowrank} and \eqref{eq:lowrankVAR} can be trained, respectively, by minimizing the following ordinary least squares (OLS), low-rank (LR) and low-Tucker-rank (LTR) objective functions,
\begin{equation*}\begin{split}
&\bm{\widehat{W}}_{\textup{OLS}}=\underset{\bm{W}}{\arg\min}\frac{1}{T}\sum_{t=1}^T\|\bm{y}_t-\bm{W}\bm{x}_t\|_2^2,\\
&\bm{\widehat{W}}_{\textup{LR}}=\bm{\widehat{A}}\bm{\widehat{B}}=\underset{\bm{A},\bm{B}}{\arg\min}\frac{1}{T}\sum_{t=1}^T\|\bm{y}_t-\bm{A}\bm{B}\bm{x}_t\|_2^2,\\
&\cm{\widehat{W}}_{\textup{LTR}}=[\![\cm{\widehat{G}};\bm{\widehat{U}}_1,\bm{\widehat{U}}_2,\bm{\widehat{U}}_3]\!]=\underset{{\scriptsize\cm{G}},\bm{U}_1,\bm{U}_2,\bm{U}_3}{\arg\min}~\frac{1}{T}\sum_{t=1}^T\|\bm{y}_t-\bm{U}_1\cm{G}_{(1)}(\bm{U}_3\otimes\bm{U}_2)^\top\bm{x}_t\|_2^2.
\end{split}
\end{equation*}
These three minimizers are called OLS, LR and LTR estimators of weights in the linear autoregressive network, respectively.

The matrix factorization or tensor Tucker decomposition is not unique. Conventionally, orthogonal constraints can be applied to these components to address the uniqueness issue. However, we do not impose any constraints on the components to simplify the optimization and mainly focus on the whole weight matrix or tensor instead of its decomposition.

\subsection{Sample Complexity Analysis}
The sample complexity of a neural network is defined as the training sample size requirement to obtain a certain training error with a high probability, and is a reasonable measure of learning efficiency.
We conduct a sample complexity analysis for the three estimators, $\bm{\widehat{W}}_{\textup{OLS}}$, $\bm{\widehat{W}}_{\textup{LR}}$ and $\cm{\widehat{W}}_{\textup{LTR}}$, under the high-dimensional setting by allowing both $N$ and $P$ to grow with the sample size $T$ at arbitrary rates.

We further assume that the sequence $\{\bm{y}_t\}$ is generated from a linear autoregressive process with additive noises,
\begin{equation}\label{eq:VARp}
\bm{y}_t=\bm{A}_1\bm{y}_{t-1}+\bm{A}_2\bm{y}_{t-2}+\cdots+\bm{A}_{t-P}\bm{y}_{t-P}+\bm{e}_t.
\end{equation}
Denote by $\bm{W}_0=(\bm{A}_1,\bm{A}_2,\dots,\bm{A}_P)$ the true parameters in \eqref{eq:VARp} and by $\cm{W}_0$ the corresponding folded tensor. 
We assume that $\cm{W}_0$ has Tucker ranks $r_1$, $r_2$ and $r_3$, and require the following conditions to hold.

\begin{condition}
	\label{cond:stationarity}
	All roots of matrix polynomial $|\bm{I}_N-\bm{A}_1z-\dots-\bm{A}_Pz^P|=0$ are outside
	unit circle.
\end{condition}

\begin{condition}
	\label{cond:error}
	The errors $\{\bm{e}_t\}$ is a sequence of independent Gaussian random vectors with mean zero and positive definite covariance matrix $\bm{\Sigma}_{\bm{e}}$, and $\bm{e}_t$ is independent of the historical observations $\bm{y}_{t-1},\bm{y}_{t-2},\cdots$ .
\end{condition}

Condition \ref{cond:stationarity} is sufficient and necessary for the strict stationarity of the linear autoregressive process. The Gaussian assumption in Condition \ref{cond:error} is very common in high-dimensional time series literature for technical convenience \citep{basu2015regularized}.

Multiple sequence data may exhibit strong temporal and inter-sequence dependence. To analyze how dependence in the data affects the learning efficiency, we follow \citet{basu2015regularized} to use the spectral measure of dependence below.

\begin{definition}
	Define the matrix polynomial $\mathcal{A}(z)=\bm{I}_N-\bm{A}_1z-\dots-\bm{A}_Pz^P$, where z is any point on the complex plane, and define its extreme eigenvalues as
	\begin{align*}
	\mu_{\min}(\mathcal{A}) &:=\underset{|z|=1}{\min} ~ \lambda_{\min}(\mathcal{A}^*(z)\mathcal{A}(z)),\\
	\mu_{\max}(\mathcal{A}) &:=\underset{|z|=1}{\max} ~ \lambda_{\max}(\mathcal{A}^*(z)\mathcal{A}(z)),
	\end{align*}
	where $\mathcal{A}^*(z)$ is the Hermitian transpose of $\mathcal{A}(z)$.
\end{definition}

By Condition \ref{cond:stationarity}, the extreme eigenvalues are bounded away from zero and infinity, $0<\mu_{\min}(\mathcal{A})\leq\mu_{\max}(\mathcal{A})<\infty$. Based on the spectral measure of dependence, we can derive the non-asymptotic statistical convergence rates for the LR and LTR estimators. Note that $C$ denotes a generic positive constant, which is independent of dimension and sample size, and may represent different values even on the same line. For any positive number $a$ and $b$, $a\lesssim b$ and $a\gtrsim b$ denote that there exists $C$ such that $a<Cb$ and $a>Cb$, respectively.

\begin{theorem}
	\label{thm:errorbound}
	Suppose that Conditions 1-2 are satisfied, and the sample size $T\gtrsim r_1r_2r_3+Nr_1+Nr_2+Pr_3$. With probability at least $1-\exp[-C(r_1r_2r_3+Nr_1+Nr_2+Pr_3)]-\exp(-C\sqrt{T})$,
	\begin{equation*}
	\|\cm{\widehat{W}}_{\textup{LTR}}-\cm{W}_0\|_{\textup{F}}\lesssim\mathcal{M}\sqrt{\frac{r_1r_2r_3+Nr_1+Nr_2+Pr_3}{T}},
	\end{equation*}
	where $\mathcal{M}:=[\lambda_{\max}(\bm{\Sigma}_{\bm{e}})\mu_{\max}(\mathcal{A})]/[\lambda_{\min}(\bm{\Sigma}_{\bm{e}})\mu^{1/2}_{\min}(\mathcal{A})]$ is the dependence measure constant.
\end{theorem}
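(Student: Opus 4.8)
The plan is to run the standard restricted strong convexity (RSC) argument for high-dimensional least squares, adapted to the non-convex low-Tucker-rank parametrization and, crucially, to the serially dependent Gaussian design, following the spectral-measure machinery of \citet{basu2015regularized}. Write $\cm{D}:=\cm{\widehat{W}}_{\textup{LTR}}-\cm{W}_0$ for the error tensor, $\bm{D}:=\cm{D}_{(1)}$ for its mode-1 unfolding, and $\cm{Z}$ for the folding of $\bm{Z}:=\frac1T\sum_{t=1}^T\bm{e}_t\bm{x}_t^\top\in\mathbb{R}^{N\times NP}$. Since $\cm{\widehat{W}}_{\textup{LTR}}$ is feasible for the LTR problem and, under \eqref{eq:VARp}, $\bm{y}_t=\bm{W}_0\bm{x}_t+\bm{e}_t$, expanding the optimality inequality $\frac1T\sum_t\|\bm{y}_t-\bm{D}_{(1)}\bm{x}_t-\bm{W}_0\bm{x}_t\|_2^2\le\frac1T\sum_t\|\bm{y}_t-\bm{W}_0\bm{x}_t\|_2^2$ and rearranging yields the basic inequality
\[
\frac1T\sum_{t=1}^T\|\bm{D}\bm{x}_t\|_2^2\ \le\ \frac2T\sum_{t=1}^T\bm{e}_t^\top\bm{D}\bm{x}_t\ =\ 2\langle\cm{D},\cm{Z}\rangle .
\]
Moreover $\cm{\widehat{W}}_{\textup{LTR}}$ (by construction) and $\cm{W}_0$ (by assumption) have Tucker ranks at most $(r_1,r_2,r_3)$, so $\cm{D}$ lies in the restricted set $\mathcal{S}:=\{\cm{D}:\mathrm{rank}_i(\cm{D})\le 2r_i,\ i=1,2,3\}$; it therefore suffices to lower-bound the left-hand side and upper-bound the right-hand side uniformly over $\mathcal{S}$, with $d:=r_1r_2r_3+Nr_1+Nr_2+Pr_3$ the effective dimension.

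For the lower bound I would establish RSC: on an event of probability at least $1-\exp(-Cd)-\exp(-C\sqrt T)$,
\[
\frac1T\sum_{t=1}^T\|\bm{D}\bm{x}_t\|_2^2\ \ge\ c\,\lambda_{\min}(\bm{\Gamma})\,\|\cm{D}\|_{\textup{F}}^2\qquad\text{for all }\cm{D}\in\mathcal{S},
\]
where $\bm{\Gamma}:=\mathbb{E}(\bm{x}_t\bm{x}_t^\top)$. By homogeneity it is enough to prove this on $\mathcal{S}\cap\{\|\cdot\|_{\textup{F}}=1\}$, which (together with the slightly larger rank-$4r_i$ set needed for the discretization self-bound) admits an $\epsilon$-net of log-cardinality $O(d)$ — exactly the number of free parameters in a Tucker tensor of ranks proportional to $(r_1,r_2,r_3)$. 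For each fixed net element $\frac1T\sum_t\|\bm{D}\bm{x}_t\|_2^2=\langle\bm{D},\bm{D}\widehat{\bm{\Gamma}}\rangle$, with $\widehat{\bm{\Gamma}}=\frac1T\sum_t\bm{x}_t\bm{x}_t^\top$, is a quadratic form in a stationary Gaussian sequence; a Bernstein-type concentration bound for such forms, routed through the spectral density so that $\lambda_{\min}(\bm{\Gamma})\gtrsim\lambda_{\min}(\bm{\Sigma}_{\bm e})/\mu_{\max}(\mathcal{A})$ as in \citet{basu2015regularized}, plus a union bound and a discretization step, gives the claim, where the requirement $T\gtrsim d$ keeps the resulting RSC constant strictly positive.

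For the upper bound on $2\langle\cm{D},\cm{Z}\rangle$, a standard $\epsilon$-net argument over $\mathcal{S}$ (again log-entropy $O(d)$) reduces $\sup_{\cm{D}\in\mathcal{S},\,\|\cm{D}\|_{\textup{F}}\le 1}\langle\cm{D},\cm{Z}\rangle$, up to a constant factor, to the maximum of $\langle\cm{V},\cm{Z}\rangle=\frac1T\sum_t\bm{e}_t^\top\bm{V}_{(1)}\bm{x}_t$ over net elements $\cm{V}$. Conditioning on $\{\bm{x}_t\}_{t=1}^T$ is legitimate because $\bm{e}_t$ is independent of $\bm{x}_t$ by Condition \ref{cond:error}, so conditionally $\langle\cm{V},\cm{Z}\rangle$ is Gaussian with variance at most $\lambda_{\max}(\bm{\Sigma}_{\bm e})\cdot\frac1T\sum_t\|\bm{V}_{(1)}\bm{x}_t\|_2^2/T\lesssim\lambda_{\max}(\bm{\Sigma}_{\bm e})\lambda_{\max}(\bm{\Gamma})/T$ on the event where the restricted sample second moment is controlled by $\lambda_{\max}(\bm{\Gamma})\lesssim\lambda_{\max}(\bm{\Sigma}_{\bm e})/\mu_{\min}(\mathcal{A})$. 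A union bound over the net then yields
\[
\langle\cm{D},\cm{Z}\rangle\ \lesssim\ \|\cm{D}\|_{\textup{F}}\,\sqrt{\lambda_{\max}(\bm{\Sigma}_{\bm e})\lambda_{\max}(\bm{\Gamma})}\,\sqrt{\frac{d}{T}}
\]
with the stated probability. Chaining this with the basic inequality and the RSC bound, cancelling one factor of $\|\cm{D}\|_{\textup{F}}$, gives $\|\cm{D}\|_{\textup{F}}\lesssim\big[\sqrt{\lambda_{\max}(\bm{\Sigma}_{\bm e})\lambda_{\max}(\bm{\Gamma})}/\lambda_{\min}(\bm{\Gamma})\big]\sqrt{d/T}$, and substituting $\lambda_{\max}(\bm{\Gamma})\lesssim\lambda_{\max}(\bm{\Sigma}_{\bm e})/\mu_{\min}(\mathcal{A})$ and $\lambda_{\min}(\bm{\Gamma})\gtrsim\lambda_{\min}(\bm{\Sigma}_{\bm e})/\mu_{\max}(\mathcal{A})$ collapses the prefactor to exactly $\mathcal{M}=\lambda_{\max}(\bm{\Sigma}_{\bm e})\mu_{\max}(\mathcal{A})/[\lambda_{\min}(\bm{\Sigma}_{\bm e})\mu^{1/2}_{\min}(\mathcal{A})]$, which is the stated bound.

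The main obstacle is making the RSC and deviation bounds hold \emph{uniformly over the low-Tucker-rank set} $\mathcal{S}$ for a dependent design. Two issues compound: the covering entropy of $\mathcal{S}$ must be computed so that it comes out as $r_1r_2r_3+Nr_1+Nr_2+Pr_3$ — the genuine Tucker degrees of freedom, with the core contributing $r_1r_2r_3$ and each factor $\bm{U}_i$ contributing $r_i$ times its row dimension — rather than something larger such as $Nr_1r_2$; and the temporal and cross-sectional dependence in $\{\bm{y}_t\}$ rules out off-the-shelf i.i.d.\ concentration, so every tail bound must be funneled through the spectral measures $\mu_{\min}(\mathcal{A})$ and $\mu_{\max}(\mathcal{A})$, which is exactly where Conditions \ref{cond:stationarity}--\ref{cond:error} and the constant $\mathcal{M}$ enter. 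The two probability terms $\exp(-Cd)$ and $\exp(-C\sqrt T)$ are simply the sub-Gaussian and sub-exponential regimes of the Bernstein-type tails of these quadratic/bilinear forms evaluated at the threshold $\asymp\sqrt{d/T}$, with $T\gtrsim d$ ensuring the net cardinality $\exp(Cd)$ does not overwhelm either regime.
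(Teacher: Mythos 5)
Your proposal follows essentially the same route as the paper: the basic optimality inequality, a restricted strong convexity lower bound with constant $\lambda_{\min}(\bm{\Sigma}_{\bm e})/\mu_{\max}(\mathcal{A})$, an $\epsilon$-net bound on the empirical process over the rank-$(2r_1,2r_2,2r_3)$ set with log-entropy $r_1r_2r_3+Nr_1+Nr_2+Pr_3$ (including the rank-$4r_i$ self-bounding step in the discretization), and control of the dependent design through the spectral measures of \citet{basu2015regularized}, with the $\exp(-C\sqrt{T})$ term arising from the $\beta$-mixing concentration used to bound the sample second moment of the standardized design. The combination of the eigenvalue bounds on $\bm{\Gamma}$ into the constant $\mathcal{M}$ matches the paper's calculation exactly.
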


\begin{theorem}
	\label{thm:errorbound2}
	Suppose that Conditions 1-2 are satisfied, $r\geq r_1$ and the sample size $T\gtrsim r(N+NP)$. With probability at least $1-\exp[-Cr(N+NP))]-\exp(-C\sqrt{T})$,
	\begin{equation*}
	\|\bm{\widehat{W}}_{\textup{LR}}-\bm{W}_{0}\|_{\textup{F}}\lesssim\mathcal{M}\sqrt{\frac{r(N+NP)}{T}}.
	\end{equation*}
\end{theorem}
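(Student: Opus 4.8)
The plan is to use the standard two-ingredient analysis for low-rank trace regression — a restricted lower eigenvalue (RE) bound for the temporally dependent random design $\{\bm x_t\}$, together with a uniform deviation bound for the noise cross-term — while importing the spectral-density control of \citet{basu2015regularized} to handle the autoregressive dependence. The argument runs in parallel to that of Theorem~\ref{thm:errorbound} but is simpler, because $\bm W_0$ here is just a matrix of rank $r_1$ (the $1$-mode rank of $\cm W_0$) rather than a Tucker-structured tensor, so no core/factor covering is needed and the relevant error set is simply $\{\bm\Delta:\textup{rank}(\bm\Delta)\le 2r\}$.

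\emph{Step 1 (basic inequality).} Since $r\ge r_1$, $\bm W_0$ is feasible for the LR program; optimality of $\bm{\widehat W}_{\textup{LR}}$ together with $\bm y_t=\bm W_0\bm x_t+\bm e_t$ give, writing $\bm\Delta:=\bm{\widehat W}_{\textup{LR}}-\bm W_0$ (so $\textup{rank}(\bm\Delta)\le 2r$),
\begin{equation*}
\frac1T\sum_{t=1}^T\|\bm\Delta\bm x_t\|_2^2\;\le\;\frac2T\sum_{t=1}^T\langle\bm e_t,\bm\Delta\bm x_t\rangle\;=\;2\left\langle\bm\Delta,\;T^{-1}\sum_{t=1}^T\bm e_t\bm x_t^\top\right\rangle.
\end{equation*}

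\emph{Step 2 (RE bound).} Let $\bm\Gamma_X=\mathbb E[\bm x_t\bm x_t^\top]$. From the spectral representation of the autocovariance one has $\lambda_{\min}(\bm\Gamma_X)\ge\lambda_{\min}(\bm\Sigma_{\bm e})/\mu_{\max}(\mathcal A)$ and $\lambda_{\max}(\bm\Gamma_X)\le\lambda_{\max}(\bm\Sigma_{\bm e})/\mu_{\min}(\mathcal A)$. I would then show that, with probability at least $1-\exp[-Cr(N+NP)]$,
\begin{equation*}
\left|T^{-1}\sum_{t=1}^T\|\bm\Delta\bm x_t\|_2^2-\langle\bm\Delta\bm\Gamma_X,\bm\Delta\rangle\right|\le\tfrac12\lambda_{\min}(\bm\Gamma_X)\|\bm\Delta\|_{\textup F}^2\qquad\text{for all }\textup{rank}(\bm\Delta)\le 2r,
\end{equation*}
so that on that event $T^{-1}\sum_t\|\bm\Delta\bm x_t\|_2^2\gtrsim[\lambda_{\min}(\bm\Sigma_{\bm e})/\mu_{\max}(\mathcal A)]\|\bm\Delta\|_{\textup F}^2$ and, symmetrically, $T^{-1}\sum_t\|\bm\Delta\bm x_t\|_2^2\lesssim[\lambda_{\max}(\bm\Sigma_{\bm e})/\mu_{\min}(\mathcal A)]\|\bm\Delta\|_{\textup F}^2$. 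The ingredients are: an $\epsilon$-net of $\{\bm\Delta:\textup{rank}(\bm\Delta)\le 2r,\|\bm\Delta\|_{\textup F}=1\}$ of cardinality $\exp(Cr(N+NP))$ (the rank-$2r$ variety in $\mathbb R^{N\times NP}$ has dimension $\lesssim r(N+NP)$); a Hanson--Wright--type concentration bound for the quadratic form $T^{-1}\sum_t\|\bm\Delta\bm x_t\|_2^2$ of the dependent Gaussian vector $(\bm x_1,\dots,\bm x_T)$, with deviations controlled by $\lambda_{\max}(\bm\Sigma_{\bm e})/\mu_{\min}(\mathcal A)$, as in \citet{basu2015regularized}; and a union bound, which closes precisely because $T\gtrsim r(N+NP)$.

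\emph{Step 3 (noise term) and conclusion.} For fixed $\bm\Delta$, conditioning on $\{\bm x_t\}$ and using $\bm e_t\perp\bm x_t$ (Condition~\ref{cond:error}), $\sum_t\bm e_t^\top\bm\Delta\bm x_t$ is Gaussian with variance at most $\lambda_{\max}(\bm\Sigma_{\bm e})\sum_t\|\bm\Delta\bm x_t\|_2^2\lesssim T[\lambda_{\max}(\bm\Sigma_{\bm e})]^2\mu_{\min}^{-1}(\mathcal A)\|\bm\Delta\|_{\textup F}^2$ on the event of Step~2; a Gaussian tail bound and a union bound over the same net give, with probability at least $1-\exp[-Cr(N+NP)]-\exp(-C\sqrt T)$,
\begin{equation*}
\left|\left\langle\bm\Delta,\;T^{-1}\sum_{t=1}^T\bm e_t\bm x_t^\top\right\rangle\right|\;\lesssim\;\frac{\lambda_{\max}(\bm\Sigma_{\bm e})}{\mu_{\min}^{1/2}(\mathcal A)}\sqrt{\frac{r(N+NP)}{T}}\,\|\bm\Delta\|_{\textup F},
\end{equation*}
the residual $\exp(-C\sqrt T)$ being the familiar sub-exponential contribution of the dependent-data deviation bound of \citet{basu2015regularized}. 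Combining this with Steps~1--2 on the intersection of the good events,
\begin{equation*}
\frac{c\,\lambda_{\min}(\bm\Sigma_{\bm e})}{\mu_{\max}(\mathcal A)}\|\bm\Delta\|_{\textup F}^2\;\le\;\frac1T\sum_{t=1}^T\|\bm\Delta\bm x_t\|_2^2\;\le\;2\left\langle\bm\Delta,\;T^{-1}\sum_{t=1}^T\bm e_t\bm x_t^\top\right\rangle\;\lesssim\;\frac{\lambda_{\max}(\bm\Sigma_{\bm e})}{\mu_{\min}^{1/2}(\mathcal A)}\sqrt{\frac{r(N+NP)}{T}}\,\|\bm\Delta\|_{\textup F},
\end{equation*}
and dividing by $\|\bm\Delta\|_{\textup F}$ yields $\|\bm\Delta\|_{\textup F}\lesssim\mathcal M\sqrt{r(N+NP)/T}$ with $\mathcal M$ as in Theorem~\ref{thm:errorbound}.

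\emph{Main obstacle.} The $\epsilon$-net/union-bound bookkeeping is routine; the real work is the uniform RE bound of Step~2 — establishing it over the whole rank-$\le 2r$ set, with the dependence constants $\mu_{\min}(\mathcal A),\mu_{\max}(\mathcal A)$ entering in the right places, despite the rows of the design being neither independent nor bounded. This is exactly where the spectral-density characterization of \citet{basu2015regularized} does the heavy lifting, and where one must check that the net size $\exp(Cr(N+NP))$ is absorbed by the sample-size condition $T\gtrsim r(N+NP)$.
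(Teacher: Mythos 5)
Your proposal is correct and follows essentially the same route as the paper: the paper proves Theorem \ref{thm:errorbound2} by repeating the argument for Theorem \ref{thm:errorbound} (basic inequality, restricted strong convexity via an $\epsilon$-net and the spectral-measure bounds of \citet{basu2015regularized}, and a noise deviation bound with the $\exp(-C\sqrt{T})$ term coming from the $\beta$-mixing concentration) with the covering number of low-Tucker-rank tensors replaced by that of rank-$2r$ matrices from \citet{candes2011tight}. Your three steps match this structure, including where $\mu_{\min}(\mathcal{A})$, $\mu_{\max}(\mathcal{A})$ and the net cardinality $\exp(Cr(N+NP))$ enter.
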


The proofs of Theorems \ref{thm:errorbound} and \ref{thm:errorbound2} are provided in the supplemental material. The above two theorems present the non-asymptotic convergence upper bounds for LTR and LR estimators, respectively, with probability tending to one as the dimension and sample size grow to infinity. Both upper bounds take a general form of $\mathcal{M}\sqrt{d/T}$, where $\mathcal{M}$ captures the effect from dependence across $\bm{x}_t$, and $d$ denotes the number of parameters in Tucker decomposition or matrix factorization. From Theorems \ref{thm:errorbound} and \ref{thm:errorbound2}, we then can establish the sample complexity for these two estimators accordingly.

\begin{theorem}\label{thm:samplecomplexity}
	For a training error $\epsilon>0$, if the conditions of Theorem \ref{thm:errorbound} hold, then the sample complexity is $T\gtrsim (r_1r_2r_3+Nr_1+Nr_2+Pr_3)/\epsilon^2$ for the LTR estimator to achieve $\|\cm{\widehat{W}}_{\textup{LTR}}-\cm{W}_0\|_{\textup{F}}\leq\epsilon$.

	Moreover, if the conditions of Theorem \ref{thm:errorbound2} hold, then the sample complexity  is $T\gtrsim r(N+NP)/\epsilon^2$ for the LR estimator to achieve $\|\bm{\widehat{W}}_{\textup{LR}}-\bm{W}_{0}\|_{\textup{F}}\leq\epsilon$.
\end{theorem}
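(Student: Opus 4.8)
The plan is to derive Theorem \ref{thm:samplecomplexity} as an immediate corollary of the non-asymptotic error bounds already established in Theorems \ref{thm:errorbound} and \ref{thm:errorbound2}. Write $d_{\textup{LTR}}:=r_1r_2r_3+Nr_1+Nr_2+Pr_3$ for the effective parameter dimension of the Tucker-structured network and $d_{\textup{LR}}:=r(N+NP)$ for that of the low-rank network. Theorem \ref{thm:errorbound} guarantees that, once $T\gtrsim d_{\textup{LTR}}$, there is a constant $C_0$ (independent of $N$, $P$, and $T$) such that, on an event of probability at least $1-\exp[-Cd_{\textup{LTR}}]-\exp(-C\sqrt{T})$,
\[
\|\cm{\widehat{W}}_{\textup{LTR}}-\cm{W}_0\|_{\textup{F}}\le C_0\,\mathcal{M}\sqrt{d_{\textup{LTR}}/T}.
\]

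First I would check that the dependence-measure constant $\mathcal{M}$ is dimension-free. By Condition \ref{cond:stationarity}, the spectral quantities satisfy $0<\mu_{\min}(\mathcal{A})\le\mu_{\max}(\mathcal{A})<\infty$, and by Condition \ref{cond:error} the error covariance $\bm{\Sigma}_{\bm{e}}$ is positive definite, so $0<\lambda_{\min}(\bm{\Sigma}_{\bm{e}})\le\lambda_{\max}(\bm{\Sigma}_{\bm{e}})<\infty$. Hence $\mathcal{M}=[\lambda_{\max}(\bm{\Sigma}_{\bm{e}})\mu_{\max}(\mathcal{A})]/[\lambda_{\min}(\bm{\Sigma}_{\bm{e}})\mu^{1/2}_{\min}(\mathcal{A})]$ is a finite positive constant that does not grow with the dimension or the sample size, and can therefore be absorbed into the generic constant hidden in the ``$\gtrsim$'' notation.

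Next, to force the right-hand side of the displayed bound below the target error $\epsilon$, I would simply impose $C_0\mathcal{M}\sqrt{d_{\textup{LTR}}/T}\le\epsilon$, i.e.\ $T\ge C_0^2\mathcal{M}^2 d_{\textup{LTR}}/\epsilon^2$. Combining this with the structural precondition $T\gtrsim d_{\textup{LTR}}$ of Theorem \ref{thm:errorbound} --- which is subsumed by the previous inequality whenever $\epsilon\lesssim 1$, the regime of interest --- the single condition $T\gtrsim d_{\textup{LTR}}/\epsilon^2=(r_1r_2r_3+Nr_1+Nr_2+Pr_3)/\epsilon^2$ is sufficient for $\|\cm{\widehat{W}}_{\textup{LTR}}-\cm{W}_0\|_{\textup{F}}\le\epsilon$ to hold on the same high-probability event, on which $\exp[-Cd_{\textup{LTR}}]+\exp(-C\sqrt{T})\to 0$. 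The claim for the LR estimator follows verbatim, invoking Theorem \ref{thm:errorbound2} in place of Theorem \ref{thm:errorbound} and substituting $d_{\textup{LR}}=r(N+NP)$ for $d_{\textup{LTR}}$.

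I do not anticipate a genuine obstacle here: the mathematical content is carried entirely by Theorems \ref{thm:errorbound} and \ref{thm:errorbound2}, and the only points needing care are (i) confirming, via Conditions \ref{cond:stationarity}--\ref{cond:error}, that $\mathcal{M}$ is a dimension-free constant so it may be folded into the implicit constant, and (ii) observing that the sample-size precondition of the error-bound theorems is automatically satisfied in the small-$\epsilon$ regime, so that the clean requirement $T\gtrsim d/\epsilon^2$ suffices in each case.
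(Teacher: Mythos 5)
Your proposal is correct and matches the paper's treatment exactly: the paper gives no separate proof of Theorem \ref{thm:samplecomplexity}, obtaining it as an immediate corollary of Theorems \ref{thm:errorbound} and \ref{thm:errorbound2} by setting $\mathcal{M}\sqrt{d/T}\leq\epsilon$ and solving for $T$, with $\mathcal{M}$ absorbed into the implicit constant. Your two points of care --- that $\mathcal{M}$ is dimension-free under Conditions \ref{cond:stationarity}--\ref{cond:error} and that the precondition $T\gtrsim d$ is subsumed when $\epsilon\lesssim 1$ --- are exactly the details the paper leaves implicit.
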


\begin{remark}
	The OLS estimator can be shown to have the convergence rate of $O(\sqrt{N^2P/T})$, and its sample complexity is $N^2P/\epsilon^2$ for a training error $\epsilon>0$ and $\|\bm{\widehat{W}}_{\textup{OLS}}-\bm{W}_{0}\|_{\textup{F}}\leq\epsilon$.
\end{remark}

The sample complexity for the linear autoregressive networks with different structures is proportional to the corresponding model complexity, i.e. sample complexity is $ O(\mathcal{M} d/\varepsilon^2) $. Compared with the OLS estimator, the LR and LTR estimators benefit from the compact low-dimensional structure and have smaller sample complexity. Among the three linear autoregressive networks, the LTR network has the most compact structure, and hence the smallest sample complexity.

\begin{remark}
	For the general supervised learning tasks rather than sequence modeling, the upper bound in Theorems \ref{thm:errorbound} and \ref{thm:errorbound2} can be extended to the case with independent and identically distributed $\bm{x}_t$, where $\mathcal{M}$ is replaced by the inverse of signal-to-noise ratios.
\end{remark}

The sample complexity analysis of the autoregressive networks can be extended to the general feed-forward networks, and explains why the low-rank structure can enhance the learning efficiency and reduce the sample complexity.

\section{Tucker Autoregressive Net}

This section introduces a compact autoregressive network by formulating the linear autoregressive network with the low-Tucker-rank structure \eqref{eq:lowrankVAR}, and it has a compact multi-layer CNN architecture. We call it the Tucker AutoRegressive (TAR) net for simplicity.

\subsection{Network Architecture}

Rather than directly constraining the matrix rank or Tucker ranks of weights in the zero-hidden-layer network, we can modify the network architecture by adding convolutional layers and fully connected layers to exploit low-rank structure. By some algebra, the framework \eqref{eq:lowrankVAR} can be rewritten into
\[
\bm{h}_t =\bm{U}_1\cm{G}_{(1)}\text{vec}(\bm{U}_2^\top\bm{X}_t\bm{U}_3)+\bm{b},
\]
where $\bm{X}_t=(\bm{y}_{t-1},\dots,\bm{y}_{t-P})$.
A direct translation of the low-Tucker-rank structure leads to a multi-layer convolutional network architecture with two convolutional layers and two fully connected layers; see Figure \ref{fig:CNN-direct} and Table \ref{tab:CNN}.

\begin{figure}[h]
	\centering
	\includegraphics[width=1\columnwidth]{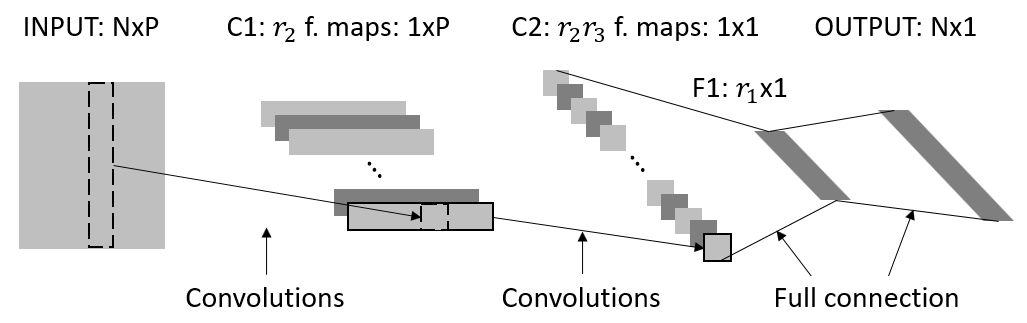}
	\caption{CNN structure of TAR net.}
	\label{fig:CNN-direct}
\end{figure}

\begin{table*}[t]
	\centering
	\begin{tabular}{ccccc}
		\toprule
		Symbol & Layer & Content and explanation & Dimensions & No. of parameters \\
		\midrule
		INPUT & - & design matrix & $N\times P$ & - \\
		C1 & $N\times1$ convolutions & $r_{2}$ feature maps & $1\times P$ & $Nr_{2}$ \\
		C2 & $1\times P$ convolutions & $r_{2}r_{3}$ feature maps & $1\times1$ & $Pr_{3}$ \\
		F1 & full connection & response factor loadings & $r_{1}\times1$ & $r_{1}r_{2}r_{3}$ \\
		OUTPUT & full connection & output prediction & $N\times1$ & $Nr_{1}$ \\
		\bottomrule
	\end{tabular}
	\caption{Specification of CNN structure in TAR net.}
	\label{tab:CNN}
\end{table*}

To be specific, each column in $\bm{U}_2$ is a $N\times 1$ convolution and the first layer outputs $r_2$ $1\times P$ feature maps. Similarly, $\bm{U}_3$ represents the convolution with kernel size $1\times P$ and $r_3$ channels. These two convolutional layers work as an encoder to extract the $r_2r_3$-dimensional representation of the $N\times P$ input $\bm{X}_t$ for predicting $\bm{y}_t$. Next, a full connection from $r_2r_3$ predictor features to $r_1$ output features with weights $\cm{G}_{(1)}$ is followed. Finally, a fully connected layer serves as a decoder to $N$ ouputs with weights $\bm{U}_1$.

The neural network architectures corresponding to the low-rank estimator $\bm{\widehat{W}}_{\textup{LR}}$ and ordinary least squares estimator without low-dimensional structure $\bm{\widehat{W}}_{\textup{OLS}}$ are the one-hidden-layer MLP with a bottleneck layer of size $r$ and the zero-hidden-layer fully connected network, respectively.

The CNN representation in Figure \ref{fig:CNN-direct} has a compact architecture with $r_1r_2r_3+Nr_1+Nr_2+Pr_3$ parameters, which is the same as that of the Tucker decomposition. Compared with the benchmark models, namely the one-hidden-layer MLP (MLP-1) and zero-hidden-layer MLP (MLP-0), the introduced low-Tucker-rank structure increases the depth of the network while reduces the total number of weights. When the Tucker ranks $r_1,r_2,r_3$ are small, the total number of parameters in our network is much smaller than those of the benchmark networks, which are $r(N+NP)$ and $N^2P$, respectively.

To capture the complicated and non-linear functional mapping between the prior inputs and future responses, non-linear activation functions, such as rectified linear unit (ReLU) or sigmoid function, can be added to each layer in the compact autoregressive network. Hence, the additional depth from transforming a low-Tucker-rank single layer to a multi-layer convolutional structure enables the network to better approximate the target function. The linear network without activation in the previous section can be called linear TAR net (LTAR).

\subsection{Separable Convolutional Kernels}

Separable convolutions have been extensively studied to replace or approximate large convolutional kernels by a series of smaller kernels. For example, this idea was explored in multiple iterations of the Inception blocks \citep{szegedy2015going,szegedy2016rethinking,szegedy2017inception} to decompose a convolutional layer with a $7\times 7$ kernel into that with $1\times 7$ and $7\times 1$ kernels.

Tensor decomposition is an effective method to obtain separable kernels. In our TAR net, these two convolutional layers extract the information from inputs along the column-wise direction and row-wise direction separately. Compared with the low-rank matrix structure, the additional decomposition in the Tucker decomposition along the second and third modes in fact segregates the full-sized convolutional kernel into $r_2r_3$ pairs of separable kernels.

\subsection{Two-Lane Network}

If no activation function is added, the first two row-wise and column-wise convolutional layers are exchangeable. However, exchanging these two layers with nonlinear activation functions can result in different nonlinear approximation and physical interpretation.

For the general case where we have no clear preference on the order of these two layers, we consider a two-lane network variant, called TAR-2 network, by introducing both structures into our model in parallel followed by an average pooling to enhance the flexibility; see Figure \ref{fig:LTRvsLTR-2}.

\begin{figure}[!htp]
	\centering
	\includegraphics[width=0.8\columnwidth]{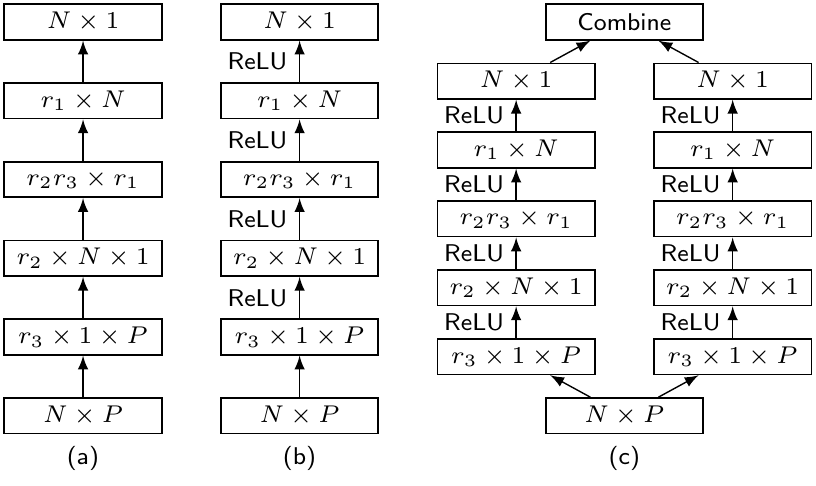}
	\caption{Structures of LTAR net (a), TAR net (b) and TAR-2 net (c).}
	\label{fig:LTRvsLTR-2}
\end{figure}

%
%


\subsection{Implementation}
\subsubsection{Details}
We implement our framework on PyTorch, and the Mean Squared Error (MSE) is the target loss function. The gradient descent method is employed for the optimization with learning rate and momentum being $0.01$ and $0.9$, respectively. If the loss function drops by less than $10^{-8}$, the procedure is then deemed to have reached convergence.

\subsubsection{Hyperparameter tuning}
In the TAR net, the sequential dependence range $P$ and the Tucker ranks $r_1$, $r_2$ and $r_3$ are prespecified hyperparameters. Since cross-validation cannot be applied to sequence modeling, we suggest tuning hyperparameters by grid search and rolling forecasting performance.

\section{Experiments}

This section first performs analysis on two synthetic datasets to verify the sample complexity established in Theorem \ref{thm:samplecomplexity} and to demonstrate the capability of TAR nets in nonlinear functional approximation.
A US macroeconomic dataset \cite{koop2013forecasting} is then analyzed by the TAR-2 and TAR nets, together with their linear counterparts. For the sake of comparison, some benchmark networks, including MLP-0, MLP-1, Recurrent Neural Network (RNN) and Long Short-Term Memory (LSTM), are also applied to the dataset.
\subsection{Numerical Analysis for Sample Complexity}
\subsubsection{Settings}
In TAR net or the low-Tucker-rank framework \eqref{eq:lowrankVAR}, the hyperparameters, $r_1, r_2$ and $r_3$, are of significantly smaller magnitude than $N$ or $P$, and are equally set to $2$ or $3$. As sample complexity is of prime interest rather than the range of sequential dependence, we let $P$ equal to $3,5$ or $8$. For each combination of $(r_1,r_2,r_3,P)$, we consider $N = 9,25$ and $36$, and the sample size $T$ is chosen such that $\sqrt{N/T} = (0.15, 0.25, 0.35, 0.45)$.

\subsubsection{Data generation}
We first generate a core tensor $\cm{G} \in \mathbb{R}^{r_1\times r_2\times r_3}$ with entries being independent standard normal random variables, and then rescale it such that the largest singular value of $\cm{G}_{(1)}$ is $0.9$.
For each $1\leq i\leq 3$, the leading $r_{i}$ singular vectors of random standard Gaussian matrices are used to form $\bm{{U}}_i$. The weight tensor $\cm{W}_0$ can thereby be reconstructed, and it is further rescaled to satisfy Condition \ref{cond:stationarity}. We generate $200$ sequences with identical $\cm{W}_0$. The first $500$ simulated data points at each sequence are discarded to alleviate the influence of the initial values. We apply the MLP-0, MLP-1 and LTAR to the synthetic dataset. The averaged estimation errors for the corresponding OLS, LR, and LTR estimators are presented in Figure \ref{fig:Thm1}.

\subsubsection{Results}
The $x$-axis in Figure \ref{fig:Thm1} represents the ratio of $\sqrt{N/T}$, and the $y$-axis represents the averaged estimation error in Frobenius norm. Along each line, as $N$ is set to be fixed, we obtain different points by readjusting the sample size $T$. Roughly speaking, regardless of the models and parameter settings, estimation error increases with varying rates as the sample size decreases. The rates for OLS rapidly become explosive, followed by LR, whereas LTR remains approximately linear, which is consistent with our findings at Theorem \ref{thm:samplecomplexity}.

Further observation reveals that the increase in $P$ predominantly accelerates the rates for OLS and LR, but appears to have insignificant influence on the estimation error from LTR.

For the case with $P=8$, instability of the estimation error manifests itself in LR under insufficient sample size, say when $\sqrt{N/T}$ is as large as $0.35$. This further provides the rationale for dimension reduction along sequential order. When $\sqrt{N/T} = 0.45$, the solution is not unique for both OLS and LR, and consequently, these points are not shown in the figure.

\subsection{Numerical Analysis for Nonlinear Approximation}

\subsubsection{Settings} The target of this experiment is to compare the expressiveness of LTAR, TAR and TAR-2 nets. The conjecture is that, regardless of the data generating process, TAR-2 and TAR nets under the same hyperparameter settings as the LTAR net would have an elevated ability to capture nonlinear features. We set $(r_1, r_2, r_3, N, P) = (2,2,2,25,3)$, and have also tried several other combinations. Similar findings can be observed, and the results are hence omitted here.

\subsubsection{Data generation} Two data generating processes are considered to create sequences with either strictly linear or highly nonlinear features in the embedding feature space. We refer to them as L-DGP and NL-DGP, respectively. L-DGP is achieved by randomly assigning weights to LTAR layers and producing a recursive sequence with a given initial input matrix. NL-DGP is attained through imposing a nonlinear functional transformation to the low-rank hidden layer of an MLP. In detail, we first transformed a $N\times P$ matrix to a $r_1\times r_2$ low-rank encoder. Then, we applied a nonlinear mapping $f(\bm{\cdot}) = \cos(1/\norm{\bm{\cdot}}_\text{F})$ to the encoder, before going through a fully connected layer to retrieve an output of size $N\times 1$.


\subsubsection{Implementation \& Evaluation}
In this experiment, we use L-DGP and NL-DGP to separately generate 200 data sequences which are fitted by TAR-2, TAR and LTAR nets. The sequence lengths are chosen to be either $101$ or $501$. For each sequence, the last data point is retained as a single test point, whereas the rest are used in model training. We adopt three evaluation metrics, namely, the averaged $L_2$ norm between prediction and true value, the standard Root-Mean-Square Error (RMSE), and Mean Absolute Error (MAE). The results are given in Table \ref{tab:sim2}.

\subsubsection{Results}
When the data generating process is linear (L-DGP), the LTAR net reasonably excels in comparison to the other two, obtaining the smallest $L_2$-norm, RMSE and MAP. TAR-2 yields poorer results for a small sample size of $100$ due to possible overparametrization. However, its elevated expressiveness leads it to outperform TAR when $T=500$.

For nonlinear data generating process (NL-DGP), as we expect, the TAR-2 and TAR nets with nonlinear structure outperform the LTAR net. In the meanwhile, as the exchangeability of latent features holds, the TAR-2 net seems to suffer from model redundancy and thereby performs worse than the TAR net.

\begin{figure}
	\centering
	\includegraphics[width=0.7\columnwidth]{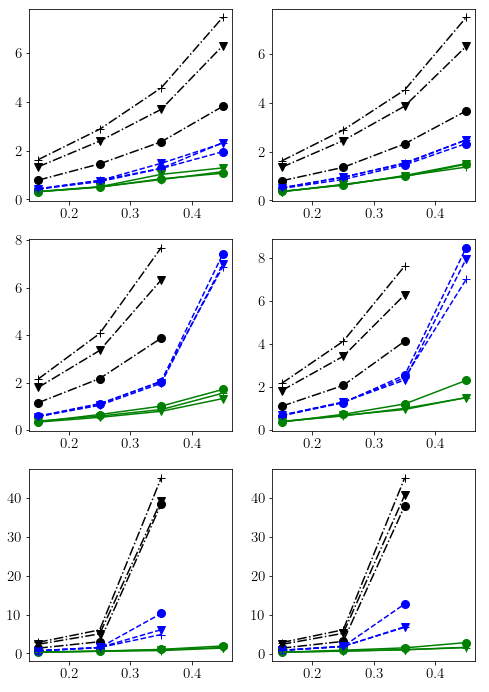}
	\caption{Experiment on sample complexity. Results are shown for OLS (\protect\blackline), LR (\protect\blueline) and LTR (\protect\greenline) estimators. Three different values of $N$ are presented by different markers: $ N = 9 $ ($\bullet $), $ N = 25 $ ($\blacktriangledown$) and $ N = 36 $ (+). We set $(r_1,r_2,r_3)=(2,2,2)$ for the three subplots in the left column and $(r_1,r_2,r_3)=(3,3,3)$ for subplots in the right column. And the upper, middle and lower panels refer to cases with $P = 3, 5$ and $9$, respectively.}
	\label{fig:Thm1}
\end{figure}

\subsection{US Macroeconomic Dataset}
\subsubsection{Dataset} We use the dataset provided in \citet{koop2013forecasting} with $40$ US macroeconomic variables. They cover various aspects of financial and industrial activities, including consumption, production indices, stock market indicators and the interest rates. The data series are taken quarterly from 1959 to 2007 with a total of $194$ observed time points. In the preprocessing step, the series were transformed to be stationary before being standardized to have zero mean and unit variance; details see the supplemental material.

\begin{table}[h]
	\centering
	\begin{tabular}{cccccc}
		\toprule
		DGP&$T$&Network& $L_2$-norm &RMSE&MAP\\
		\midrule
		\multirow{6}{*}{\makecell{L-DGP}} & \multirow{3}{*}{100} &TAR-2&5.5060&1.1238&0.8865\\
		&&TAR&5.4289&1.0998&0.8702\\
		&&LTAR& \textbf{5.1378} & \textbf{1.0388} & \textbf{0.8265}\\[1mm]

		& \multirow{3}{*}{500} &TAR-2&5.1836&1.0493&0.8369\\
		&&TAR&5.2241&1.0585&0.8436\\
		&&LTAR &\textbf{4.9338} & \textbf{0.9972} &\textbf{0.7936}\\
		\midrule
		\multirow{6}{*}{\makecell{NL-DGP}} & \multirow{3}{*}{100} &TAR-2&5.2731& \textbf{1.0703} &0.8579 \\
		&&TAR& \textbf{5.2710} &1.0712&\textbf{0.8510}\\
		&&LTAR&5.3161&1.0738&0.8573\\[1mm]
		& \multirow{3}{*}{500} &TAR-2&5.0084&1.0111&0.8062 \\
		&&TAR& \textbf{5.0036} & \textbf{1.0110} &\textbf{0.8060}\\
		&&LTAR&5.0144&1.0126&0.8087\\
		\bottomrule
	\end{tabular}
	\caption{Performance of different networks on fitting L-DGP and NL-DGP datasets.}
	\label{tab:sim2}
\end{table}

\subsubsection{Models for comparison}
For the sake of comparison, besides the proposed models, TAR-2, TAR and LTAR, we also consider four other commonly used networks in the literature with well-tuned hyperparameters. The first two are the previously mentioned MLP-0 and MLP-1. The remaining two are RNN and LSTM, which are two traditional sequence modeling frameworks.
RNN implies an autoregressive moving average framework and can transmit extra useful information through the hidden layers. It is hence expected to outperform an autoregressive network. LSTM may be more susceptible to small sample size. As a result, RNN and LSTM with the optimal tuning hyperparameter serve as our benchmarks.
\subsubsection{Implementation}
Following the settings in \citet{koop2013forecasting}, we set $P = 4$. Consistently, the sequence length in both RNN and LSTM is fixed to be $4$, and we consider only one hidden layer. The number of neurons in the hidden layer is treated as a tunable hyperparameter. To be on an equal footing with our model, the size of the hidden layer in MLP-1 is set to $4$. We further set $r_1 = 4$, $r_2 = 3$ and $r_3 = 2$. The bias terms are added back to the TAR-2, TAR and LTAR nets for expansion of the model space.

The dataset is segregated into two subsets: the first 104 time points of each series are used as the training samples with an effective sample size of 100, whereas the rolling forecast procedure is applied to the rest 90 test samples. For each network, one-step-ahead forecasting is carried out in a recursive fashion. In other words, the trained network predicts one future step, and immediately includes the new observation for the prediction of the next step. The averaged $L_2$-norm, RMSE and MAP are used as the evaluation criteria.

\begin{table}
	\centering
	\begin{tabular}{cccc}
		\toprule
		Network&$L_2$-norm&RMSE&MAE\\
		\midrule
		MLP-0 &11.126&1.8867&1.3804\\
		MLP-1 &7.8444&1.3462&1.0183 	\\
		RNN &5.5751&0.9217&0.7064	\\
		LSTM &5.8274&0.9816&0.7370	\\ [1mm]
		\textbf{LTAR} &5.5257&0.9292&0.6857	\\
		\textbf{TAR} &5.4675&0.9104&0.6828 	\\
		\textbf{TAR-2} & \textbf{5.4287} & \textbf{0.8958} & \textbf{0.6758}	\\
		\bottomrule
	\end{tabular}
	\caption{Performance comparison on US macroeconomic dataset.}
	\label{tab:Real}
\end{table}

\subsubsection{Results}
From Table \ref{tab:Real}, the proposed TAR-2 and TAR nets rank top two in terms of one-step-ahead rolling forecast performance, exceeding the fine-tuned RNN model with the size of the hidden layer equal to one. The two-lane network TAR-2 clearly outperforms the one-lane network TAR emphasizing its ability to capture non-exchangeable latent features. According to our experiments, the performance of both RNN and LSTM deteriorates as the dimension of the hidden layer increases, which indicates that overfitting is a serious issue for these two predominate sequence modeling techniques. Figure \ref{fig:real} plots the forecast values against the true values of the variables ``SEYGT10" (the spread between 10-yrs and 3-mths treasury bill rates) for the TAR-2 net and the RNN model. It can be seen that TAR-2 shows strength in capturing the pattern of peaks and troughs, and hence resembles the truth more closely.
\begin{center}
	\includegraphics[width=0.8\columnwidth]{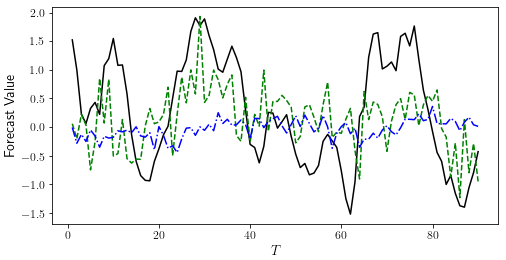}
	\captionof{figure}{Rolling forecasting for the variable ``SEYGT10". The solid line (\protect\blacklinereal), the dashed line (\protect\greenlinereal) and the dash-dotted line (\protect\bluelinereal) represent true value, predictions from TAR-2 and RNN, respectively.}
	\label{fig:real}
\end{center}
\section{Conclusion and Discussion}
This paper rearranges the weights of an autoregressive network into a tensor, and then makes use of the Tucker decomposition to introduce a low-dimensional structure. A compact autoregressive network is hence proposed to handle the sequences with long-range dependence. Its sample complexity is also studied theoretically. The proposed network can achieve better prediction performance on a macroeconomic dataset than some state-of-the-art methods including RNN and LSTM.

For future research, this work can be improved in three directions. First, our sample complexity analysis is limited to linear models, and it is desirable to extend the analysis to networks with nonlinear activation functions. Secondly, the dilated convolution, proposed by WaveNet \cite{oord2016wavenet}, can reduce the convolutional kernel size along the sequential order, and hence can efficiently access the long-range historical inputs. This structure can be easily incorporated into our framework to further compress the network. Finally, a deeper autoregressive network can be constructed by adding more layers into the current network to enhance the expressiveness of nonlinearity.

\appendix
\section{Technical Proofs of Theorems 1 and 2}

\subsection{Proofs of Theorems}

	We first prove the statistical convergence rate for $\cm{\widehat{W}}_{\text{LTR}}$ and denote it as $\cm{\widehat{W}}$ for simplicity. The main ideas of the proof come from \citet{raskutti2019convex}.

	Denote $\bm{\Delta}=\cm{\widehat{W}}-\cm{W}_0$, then by the optimality of the LTR estimator,
	\begin{equation*}\begin{split}
	&\frac{1}{T}\sum_{t=1}^T\|\bm{y}_t-\cm{\widehat{W}}_{(1)}\bm{x}_t\|_2^2\leq\frac{1}{T}\sum_{t=1}^T\|\bm{y}_t-\bm{W}_{0}\bm{x}_{t}\|_2^2\\
	\Rightarrow~&\frac{1}{T}\sum_{t=1}^T\|\bm{\Delta}_{(1)}\bm{x}_{t}\|_2^2\leq\frac{2}{T}\sum_{t=1}^T\langle\bm{e}_t,\bm{\Delta}_{(1)}\bm{x}_{t}\rangle\\
	\Rightarrow~&\frac{1}{T}\sum_{t=1}^T\|\bm{\Delta}_{(1)}\bm{x}_{t}\|_2^2\leq\frac{2}{T}\left\langle \sum_{t=1}^T\bm{e}_t\circ\bm{X}_{t},\bm{\Delta}\right\rangle,
	\end{split}\end{equation*}
	where $\circ$ denotes the tensor outer product.

	Since the Tucker ranks of both $\cm{\widehat{W}}$ and $\cm{W}$ are $(r_1,r_2,r_3)$, the Tucker ranks of $\bm{\Delta}$ are at most $(2r_1,2r_2,2r_3)$. Denote the set of tensor $\mathcal{S}(r_1,r_2,r_3)=\{\cm{W}\in\mathbb{R}^{N\times N\times P}:\|\cm{W}\|_{\textup{F}}=1,~\textup{rank}_i(\cm{W})\leq r_i,~i=1,2,3\}$. Then, we have
	\begin{equation*}
	\frac{1}{T}\sum_{t=1}^T\|\bm{\Delta}_{(1)}\bm{x}_t\|_2^2\leq2\|\bm{\Delta}\|_{\text{F}}\sup_{\scriptsize{\cm{W}}\in\mathcal{S}(2r_1,2r_2,2r_3)}\left\langle\frac{1}{T}\sum_{t=1}^T\bm{e}_t\circ\bm{X}_t,\cm{W}\right\rangle.
	\end{equation*}

	Given the restricted strong convexity condition, namely $\alpha_{\textup{RSC}}\|\bm{\Delta}\|_{\textup{F}}^2\leq T^{-1}\sum_{t=1}^T\|\bm{\Delta}_{(1)}\bm{x}_t\|_2^2$, we can obtain an upper bound,
	\begin{equation*}
	\|\bm{\Delta}\|_{\text{F}}\leq\frac{2}{\alpha_{\text{RSC}}}\sup_{\scriptsize{\cm{W}}\in\mathcal{S}(2r_1,2r_2,2r_3)}\left\langle\frac{1}{T}\sum_{t=1}^T\bm{e}_t\circ\bm{X}_t,\cm{W}\right\rangle.
	\end{equation*}

	Since $\bm{y}_t$ is a strictly-stationary VAR process, we can easily check that it is a $\beta$-mixing process. Denote the unconditional covariance matrix of $\bm{x}_t$ as $\bm{\Sigma}_{\bm{x}}$. Let $\bm{m}_t=\bm{\Sigma}_{\bm{x}}^{-1/2}\bm{x}_t$ and $\bm{M}_t\in\mathbb{R}^{N\times P}$ be the corresponding matrix from $\bm{m}_t$. By spectral measure \citep[Proposition 2.3]{basu2015regularized}, the largest eigenvalue of $\bm{\Sigma}_{\bm{x}}$ is upper bounded, namely $\lambda_{\max}(\bm{\Sigma}_{\bm{x}})\leq2\pi\mathcal{M}(f_{\bm{X}})\leq\lambda_{\max}(\bm{\Sigma}_{\bm{e}})/\mu_{\min}(\mathcal{A})$.

	Therefore, conditioning on all $(\bm{e}_t)$'s, we have
	\begin{equation*}\begin{split}
	&\mathbb{P}\left\{\sup_{\scriptsize{\cm{W}}\in\mathcal{S}(2r_1,2r_2,2r_3)}\frac{1}{T}\sum_{t=1}^T\langle\bm{e}_t\circ\bm{X}_{t},\cm{W}\rangle>x\right\}\\
	\leq&\mathbb{P}\left\{\sup_{\scriptsize{\cm{W}}\in\mathcal{S}(2r_1,2r_2,2r_3)}\frac{1}{T}\sum_{t=1}^T\langle\bm{e}_t\circ\bm{M}_t,\cm{W}\rangle>\sqrt{\frac{\mu_{\min}(\mathcal{A})}{\lambda_{\max}(\bm{\Sigma}_{\bm{e}})}}x\right\}.
	\end{split}\end{equation*}

	Denote $\bm{M}=(\bm{m}_1,\dots,\bm{m}_T)$ and denote $\bm{m}^{(i)}$ as the $i$-th row of $\bm{M}$. Further, if we condition on $\{\bm{M}_t\}$, since $\bm{e}_t$ is a sequence of iid random vectors with mean zero and covariance $\bm{\Sigma}_{\bm{e}}$, for any $x>0$,
	\begin{equation*}\begin{split}
	&\mathbb{P}\left\{\sup_{\scriptsize{\cm{W}}\in\mathcal{S}(2r_1,2r_2,2r_3)}\frac{1}{T}\sum_{t=1}^T\langle\bm{e}_t\circ\bm{M}_t,\cm{W}\rangle>x\right\}\\
	\leq&\mathbb{P}\left\{\sup_{\scriptsize{\cm{W}}\in\mathcal{S}(2r_1,2r_2,2r_3)}\max_{1\leq j\leq NP}\sqrt{\frac{\|\bm{m}^{(j)}\|_2^2}{T}}\sqrt{\frac{\lambda_{\max}(\bm{\Sigma}_{\bm{e}})}{T}}\langle\cm{N},\cm{W}\rangle>x\right\},
	\end{split}\end{equation*}
	where $\cm{N}\in\mathbb{R}^{N\times N\times P}$ is a random tensor with i.i.d. standard normal entries.

	Since $\bm{m}^{(j)}$ is a sequence of $\beta$-mixing random variables with mean zero and unit variance, by Lemma \ref{lemma:beta_mixing}, for each $j$, there exists some constant $C$ and $c>0$, such that
	\begin{equation*}
	\mathbb{P}\left[\frac{\|\bm{m}^{(j)}\|_2^2}{T}\geq4\right]\leq C\sqrt{T}\exp(-c\sqrt{T}).
	\end{equation*}
	Taking a union bound, if $\sqrt{T}\gtrsim\log(NP)$, we have
	\begin{equation*}
	\mathbb{P}\left[\max_{1\leq j\leq NP}\frac{\|\bm{m}^{(j)}\|_2}{\sqrt{T}}\leq2\right]\geq1-C\exp[c(\log(T)+\log(NP)-\sqrt{T})]\geq1-C\exp(-c\sqrt{T}).
	\end{equation*}

	Therefore, we have
	\begin{equation*}\begin{split}
	&\mathbb{P}\left[\sup_{\scriptsize{\cm{W}}\in\mathcal{S}(2r_1,2r_2,2r_3)}\frac{1}{T}\sum_{t=1}^T\langle\bm{e}_t\circ\bm{X}_{t},\cm{W}\rangle>x\right]\\
	\leq&\mathbb{P}\left[\sup_{\scriptsize{\cm{W}}\in\mathcal{S}(2r_1,2r_2,2r_3)}\frac{1}{T}\sum_{t=1}^T\langle\bm{e}_t\circ\bm{M}_t,\cm{W}\rangle>\sqrt{\frac{\mu_{\min}(\mathcal{A})}{\lambda_{\max}(\bm{\Sigma}_{\bm{e}})}}x\right]\\
	\leq&\mathbb{P}\left[\sup_{\scriptsize{\cm{W}}\in\mathcal{S}(2r_1,2r_2,2r_3)}2\sqrt{\frac{\lambda_{\max}(\bm{\Sigma}_{\bm{e}})}{T}}\langle\cm{N},\cm{W}\rangle\geq\sqrt{\frac{\mu_{\min}(\mathcal{A})}{\lambda_{\max}(\bm{\Sigma}_{\bm{e}})}}x\right]+C\exp(-c\sqrt{T})\\
	=&\mathbb{P}\left[\sup_{\scriptsize{\cm{W}}\in\mathcal{S}(2r_1,2r_2,2r_3)}\frac{1}{\sqrt{T}}\langle\cm{N},\cm{W}\rangle\geq\frac{\sqrt{\mu_{\min}(\mathcal{A})}x}{2\lambda_{\max}(\bm{\Sigma}_{\bm{e}})}\right]+C\exp(-c\sqrt{T}).
	\end{split}\end{equation*}

	For any fixed $\cm{W}\in\mathcal{S}(2r_1,2r_2,2r_3)$, it can be checked that $\langle\cm{N},\cm{W}\rangle\sim N(0,1)$. Hence, there exists a constant $C$ such that for any $t>0$
	\begin{equation*}
	\mathbb{P}\left[\langle\cm{N},\cm{A}\rangle\geq t\right]\leq\exp(-Ct^2).
	\end{equation*}

	Consider a $\epsilon$-net $\overline{\mathcal{S}}(2r_1,2r_2,2r_3)$ for $\mathcal{S}(2r_1,2r_2,2r_3)$. Then, for any $\cm{W}\in\mathcal{S}(2r_1,2r_2,2r_3)$, there exists a $\cm{\overline{W}}\in\overline{\mathcal{S}}(2r_1,2r_2,2r_3)$ such that $\|\cm{\overline{W}}-\cm{W}\|_{\text{F}}\leq\epsilon$. Note that the multilinear ranks of $\bm{\overline{\Delta}}=\cm{\overline{W}}-\cm{W}$ are at most $(4r_1,4r_2,4r_3)$. As shown in Figure \ref{fig:tucker8}, we can split the higher order singular value decomposition (HOSVD) of $\bm{\overline{\Delta}}$ into 8 parts such that $\bm{\overline{\Delta}}=\sum_{i=1}^8\bm{\Delta}_i$ such that $\text{rank}_j(\bm{\Delta}_i)\leq 2r_j$ for $i=1,\dots,8$ and $j=1,2,3$, and $\langle\bm{\Delta}_j,\bm{\Delta}_k\rangle=0$ for any $j\neq k$.

	\begin{figure}[!htbp]
		\begin{center}
			\includegraphics[width=8cm]{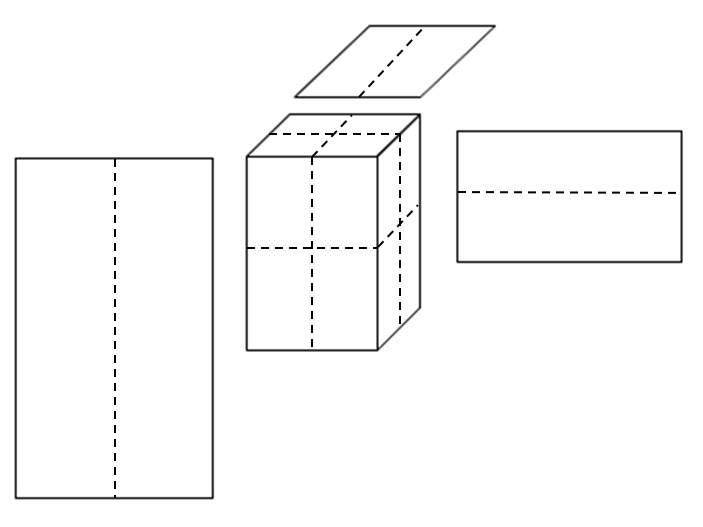}
			\caption{Splitting HOSVD}
			\label{fig:tucker8}
		\end{center}
	\end{figure}

	Note that
	\begin{equation*}
	\langle\cm{N},\cm{W}\rangle=\langle\cm{N},\cm{\overline{W}}\rangle+\sum_{i=1}^8\langle\cm{N},\bm{\overline{\Delta}}_i\rangle=\langle\cm{N},\cm{\overline{W}}\rangle+\sum_{i=1}^8\langle\cm{N},\bm{\overline{\Delta}}_i/\|\bm{\overline{\Delta}}_i\|_{\text{F}}\rangle\|\bm{\overline{\Delta}}_i\|_{\text{F}}.
	\end{equation*}
	Since each $\bm{\overline{\Delta}}_i/\|\bm{\overline{\Delta}}_i\|_{\text{F}}\in\mathcal{S}(2r_1,2r_2,2r_3)$, $\langle\cm{N},\bm{\overline{\Delta}}_i/\|\bm{\overline{\Delta}}_i\|_{\text{F}}\rangle\leq\sup_{\cm{A}\in\mathcal{S}(2r_1,2r_2,2r_3)}\langle\cm{N},\cm{A}\rangle$.

	Since $\|\bm{\overline{\Delta}}\|_\text{F}^2=\sum_{i=1}^8\|\bm{\overline{\Delta}}_i\|_\text{F}^2$, by Cauchy inequality, $\sum_{i=1}^8\|\bm{\overline{\Delta}}_i\|_{\text{F}}\leq2\sqrt{2}\|\bm{\overline{\Delta}}\|_{\text{F}}\leq2\sqrt{2}\epsilon$. Hence, we have
	\begin{equation*}\begin{split}
	\gamma:=\sup_{\scriptsize{\cm{W}}\in\mathcal{S}(2r_1,2r_2,2r_3)}\langle\cm{N},\cm{W}\rangle\leq\max_{\scriptsize{\cm{\overline{W}}}\in\overline{\mathcal{S}}(2r_1,2r_2,2r_3)}\langle\cm{N},\cm{\overline{W}}\rangle+2\sqrt{2}\gamma\epsilon.
	\end{split}\end{equation*}
	In other words,
	\begin{equation*}
	\sup_{\scriptsize{\cm{W}}\in\mathcal{S}(2r_1,2r_2,2r_3)}\langle\cm{N},\cm{W}\rangle\leq(1-2\sqrt{2}\epsilon)^{-1}\max_{\scriptsize{\cm{\overline{W}}}\in\overline{\mathcal{S}}(2r_1,2r_2,2r_3)}\langle\cm{N},\cm{\overline{W}}\rangle.
	\end{equation*}
	Therefore, we have
	\begin{equation*}\begin{split}
	&\mathbb{P}\left[\sup_{\scriptsize{\cm{W}}\in\mathcal{S}(2r_1,2r_2,2r_3)}\frac{1}{\sqrt{T}}\langle\cm{N},\cm{W}\rangle\geq\frac{\sqrt{\mu_{\min}(\mathcal{A})}x}{2\lambda_{\max}(\bm{\Sigma}_{\bm{e}})}\right]\\
	\leq&\mathbb{P}\left[\max_{\scriptsize{\cm{W}}\in\overline{\mathcal{S}}(2r_1,2r_2,2r_3)}\frac{1}{\sqrt{T}}\langle\cm{N},\cm{W}\rangle\geq\frac{(1-2\sqrt{2}\epsilon)\sqrt{\mu_{\min}(\mathcal{A})}x}{2\lambda_{\max}(\bm{\Sigma}_{\bm{e}})}\right]\\
	\leq&|\overline{\mathcal{S}}(2r_1,2r_2,2r_3)|\cdot\mathbb{P}\left[\langle\cm{N},\cm{W}\rangle\geq\frac{(1-2\sqrt{2}\epsilon)\sqrt{T\mu_{\min}(\mathcal{A})}x}{2\lambda_{\max}(\bm{\Sigma}_{\bm{e}})}\right]\\
	\leq&\exp\left[\log(|\overline{\mathcal{S}}(2r_1,2r_2,2r_3)|)-\frac{C(1-2\sqrt{2}\epsilon)^2T\mu_{\min}(\mathcal{A})x^2}{\lambda^2_{\max}(\bm{\Sigma}_{\bm{e}})}\right].
	\end{split}\end{equation*}

	By Lemma \ref{lemma:covering}, $|\overline{\mathcal{S}}(r_1,r_2,r_3)|\leq(12/\epsilon)^{8r_1r_2r_3+2Nr_1+2Nr_2+2Pr_3}$. We can take $\epsilon=1/10$ and $x=C\sqrt{r_1r_2r_3+Nr_1+Nr_2+Pr_3}\lambda_{\max}(\bm{\Sigma}_{\bm{e}})T^{-1/2}\mu^{-1/2}_{\min}(\mathcal{A})$, and then obtain that
	\begin{equation*}\begin{split}
	&\mathbb{P}\left[\sup_{\scriptsize{\cm{W}}\in\mathcal{S}(2r_1,2r_2,2r_3)}\frac{1}{T}\sum_{t=1}^T\langle\bm{e}_t\circ\bm{X}_{t},\cm{W}\rangle\geq\frac{C\lambda_{\max}(\bm{\Sigma}_{\bm{e}})}{\mu^{1/2}_{\min}(\mathcal{A})}\sqrt{\frac{r_1r_2r_3+Nr_1+Nr_2+Pr_3}{T}}\right]\\
	\leq&1-\exp[-C(r_1r_2r_3+Nr_1+Nr_2+Pr_3)]-\exp(-C\sqrt{T}).
	\end{split}\end{equation*}

	Finally, by Lemma \ref{lemma:RSC}, the restricted convexity condition holds for $\alpha_{\text{RSC}}=\lambda_{\min}(\bm{\Sigma}_{\bm{e}})/(2\mu_{\max}(\mathcal{A}))$, with probability at least $1-C\exp[-c(r_1r_2r_3+Nr_1+Nr_2+Pr_3)]$, which concludes the proof of $\cm{\widehat{W}}_{\text{LTR}}$.

	Similarly, we can obtain the error upper bound for $\cm{\widehat{W}}_{\text{LT}}$ by replacing the covering number of low-Tucker-rank tensors to that of low-rank matrices, and the covering number of low-rank matrices are investigated by \citet{candes2011tight}.

\subsection{Three Lemmas Used in the Proofs of Theorems}

\begin{lemma} \label{lemma:covering} (Covering number of low-multilinear-rank tensors) The $\epsilon$-covering number of the set $\mathcal{S}(r_1,r_2,r_3):=\{\cm{T}\in\mathbb{R}^{p_1\times p_2\times p_3}:\|\cm{T}\|_{\textup{F}}=1,\textup{rank}_i(\cm{T}_{(i)})\leq r_i,~i=1,2,3\}$ is
	\begin{equation*}
	|\overline{\mathcal{S}}(r_1,r_2,r_3)|\leq(12/\epsilon)^{r_1r_2r_3+p_1r_1+p_2r_2+p_3r_3}.
	\end{equation*}
\end{lemma}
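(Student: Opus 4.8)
\textbf{Proof proposal for Lemma \ref{lemma:covering}.}

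The plan is to cover the set $\mathcal{S}(r_1,r_2,r_3)$ by building an $\epsilon$-net componentwise through the HOSVD/Tucker parametrization: any $\cm{T}\in\mathcal{S}(r_1,r_2,r_3)$ can be written as $\cm{T}=[\![\cm{C};\bm{V}_1,\bm{V}_2,\bm{V}_3]\!]$ with $\bm{V}_i\in\mathbb{R}^{p_i\times r_i}$ having orthonormal columns and $\cm{C}\in\mathbb{R}^{r_1\times r_2\times r_3}$ the core with $\|\cm{C}\|_{\textup{F}}=\|\cm{T}\|_{\textup{F}}=1$. So it suffices to discretize each factor matrix on the Stiefel manifold $\{\bm{V}\in\mathbb{R}^{p_i\times r_i}:\bm{V}^\top\bm{V}=\bm{I}_{r_i}\}$ and the core on the unit Frobenius sphere in $\mathbb{R}^{r_1 r_2 r_3}$, then show that closeness of all four pieces forces closeness of the reconstructed tensors.

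The key steps, in order: (i) Record the standard covering bounds for the building blocks. The unit Frobenius-ball (hence sphere) in $\mathbb{R}^{r_1r_2r_3}$ admits an $\epsilon/4$-net of cardinality at most $(12/\epsilon)^{r_1r_2r_3}$ by the classical volumetric argument; likewise, each Stiefel manifold in $\mathbb{R}^{p_i\times r_i}$, viewed as a subset of the Frobenius ball of radius $\sqrt{r_i}$, admits an $\epsilon/4$-net (in Frobenius norm, which dominates the operator norm) of size at most $(12/\epsilon)^{p_i r_i}$ — the exponent is $p_i r_i$ since that is the ambient dimension in which the manifold sits. (ii) Form the product net: pick $\cm{\overline{C}}$ from the core net and $\bm{\overline{V}}_i$ from each Stiefel net, and set $\cm{\overline{T}}=[\![\cm{\overline{C}};\bm{\overline{V}}_1,\bm{\overline{V}}_2,\bm{\overline{V}}_3]\!]$. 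The total number of such products is at most $(12/\epsilon)^{r_1r_2r_3+p_1r_1+p_2r_2+p_3r_3}$, which is the claimed bound. (iii) Telescoping error estimate: write the difference $\cm{T}-\cm{\overline{T}}$ as a telescoping sum over the four perturbations,
\begin{equation*}
\cm{T}-\cm{\overline{T}}=[\![\cm{C}-\cm{\overline{C}};\bm{V}_1,\bm{V}_2,\bm{V}_3]\!]+[\![\cm{\overline{C}};\bm{V}_1-\bm{\overline{V}}_1,\bm{V}_2,\bm{V}_3]\!]+\cdots,
\end{equation*}
and bound each term in Frobenius norm using $\|\cm{G}\times_n\bm{B}\|_{\textup{F}}\leq\|\bm{B}\|_{\textup{op}}\|\cm{G}\|_{\textup{F}}$ together with $\|\bm{V}_i\|_{\textup{op}}=1$, $\|\bm{\overline{V}}_i\|_{\textup{op}}\leq1$ (or within $\epsilon/4$ of it), and $\|\cm{\overline{C}}\|_{\textup{F}}\leq1$. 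Each of the four terms is then at most $\epsilon/4$, so $\|\cm{T}-\cm{\overline{T}}\|_{\textup{F}}\leq\epsilon$. Finally, project each $\cm{\overline{T}}$ back onto $\mathcal{S}(r_1,r_2,r_3)$ if one wants the net to lie in the set itself, which at most doubles the radius and can be absorbed by adjusting constants.

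The main obstacle — really the only delicate point — is step (iii): controlling the perturbation of a multilinear product when several factors move simultaneously. The naive bound can pick up factors of $\sqrt{r_i}$ from $\|\bm{V}_i\|_{\textup{F}}$ if one is careless about using the operator norm rather than the Frobenius norm when a factor is being multiplied rather than perturbed; one must consistently bound the \emph{perturbed} factor in Frobenius norm (to use its net radius) and all the \emph{carried-along} factors in operator norm (where orthonormality gives norm exactly $1$). Keeping this bookkeeping straight, and verifying that the net radii $\epsilon/4$ chosen for the pieces indeed compose to give $\epsilon$ for the tensor, is the substance of the argument; everything else is the standard volumetric covering bound $(3/\delta)^{\dim}$ applied in the right ambient dimensions, with the constant $12$ arising from $\delta=\epsilon/4$.
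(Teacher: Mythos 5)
Your proposal follows essentially the same route as the paper: parametrize $\mathcal{S}(r_1,r_2,r_3)$ by the HOSVD, build separate $\epsilon/4$-nets for the core and the three Stiefel factors, take the product net (whose cardinality multiplies to the claimed bound), and control $\|\cm{T}-\cm{\overline{T}}\|_{\textup{F}}$ by a four-term telescoping sum. The one substantive difference is in how the Stiefel factors are covered, and it is exactly there that your argument has a quantitative gap. You cover $O_{p_i,r_i}$ by an $\epsilon/4$-net \emph{in Frobenius norm}, invoking the volumetric bound for the ball containing it; but that ball has radius $\sqrt{r_i}$, so the volumetric argument yields cardinality at most $(12\sqrt{r_i}/\epsilon)^{p_i r_i}$, not $(12/\epsilon)^{p_i r_i}$ as you claim. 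This extra polynomial factor is harmless for the downstream use of the lemma (only the exponent matters there, and $\epsilon$ is a fixed constant), but it means you do not obtain the stated bound literally. The paper avoids this by covering the Stiefel manifold in the $\|\cdot\|_{1,2}$ norm (maximum column $\ell_2$-norm, following the Cand\`es--Plan matrix argument), under which $O_{p_i,r_i}$ sits inside the \emph{unit} ball of an ambient space of dimension $p_i r_i$, giving exactly $(12/\epsilon)^{p_i r_i}$.

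Note also that the two choices of norm are not interchangeable in step (iii): with your Frobenius net the perturbation term is handled by $\|\cm{\overline{G}}\times_i(\bm{V}_i-\bm{\overline{V}}_i)\|_{\textup{F}}\leq\|\bm{V}_i-\bm{\overline{V}}_i\|_{\textup{op}}\|\cm{\overline{G}}\|_{\textup{F}}\leq\|\bm{V}_i-\bm{\overline{V}}_i\|_{\textup{F}}\|\cm{\overline{G}}\|_{\textup{F}}$, which is fine; but with the $\|\cdot\|_{1,2}$ net one cannot pass to the operator norm without losing a factor $\sqrt{r_i}$, and the paper instead exploits the all-orthogonality of the HOSVD core (the rows of $\cm{G}_{(i)}$ are mutually orthogonal) to get $\|\cm{\overline{G}}\times_i(\bm{V}_i-\bm{\overline{V}}_i)\|_{\textup{F}}\leq\|\cm{\overline{G}}\|_{\textup{F}}\|\bm{V}_i-\bm{\overline{V}}_i\|_{1,2}$. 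So the choice of covering norm and the choice of perturbation inequality must be made jointly; your bookkeeping warning is directed at the right issue but the slip it should catch is in the covering number, not the perturbation bound. Your closing remark about projecting the net back onto $\mathcal{S}$ (the product net points need not have unit norm) is a point the paper glosses over, and is a worthwhile addition.
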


\begin{proof}
	The proof hinges on the covering number for the low-rank matrix studied by \citet{candes2011tight}.

	Recall the HOSVD $\cm{T}=[\![\cm{G};\bm{U}_1,\bm{U}_2,\bm{U}_3]\!]$, where $\|\cm{T}\|_{\text{F}}=1$ and each $\bm{U}_i$ is an orthonormal matrix. We construct an $\epsilon$-net for $\cm{T}$ by covering the set of $\cm{G}$ and all $\bm{U}_i$'s. We take $\overline{G}$ to be an $\epsilon/4$-net for $\cm{G}$ with $|\overline{G}|\leq(12/\epsilon)^{r_1r_2r_3}$. Next, let $O_{n,r}=\{\bm{U}\in\mathbb{R}^{n\times r}: \bm{U}^\top\bm{U}=I_{r}\}$. To cover $O_{n,r}$, it is beneficial to use the $\|\cdot\|_{1,2}$ norm, defined as
	\begin{equation*}
	\|\bm{X}\|_{1,2}=\max_{i}\|\bm{X}_i\|_2,
	\end{equation*}
	where $\bm{X}_i$ denotes the $i$th column of $\bm{X}$. Let $Q_{n,r}=\{\bm{X}\in\mathbb{R}^{n\times r}:\|\bm{X}\|_{1,2}\leq1\}$. One can easily check that $O_{n,r}\subset Q_{n,r}$, and thus an $\epsilon/4$-net $\overline{O}_{n,r}$ for $O_{n,r}$ obeying $|\overline{O}_{n,r}|\leq(12/\epsilon)^{nr}$.

	Denote $\overline{T}=\{[\![\cm{\overline{G}};\bm{\overline{U}}_1,\bm{\overline{U}}_2,\bm{\overline{U}}_3]\!]:~\cm{\overline{G}}\in\overline{G},~\bm{\overline{U}}_i\in \overline{O}_{n_i,r_i},~i=1,2,3\}$ and we have $|\overline{T}|\leq|\overline{G}|\times|\overline{O}_{N\times r_1}|\times|\overline{O}_{N\times r_2}|\times|\overline{O}_{P\times r_3}|=(12/\epsilon)^{r_1r_2r_3+Nr_1+Nr_2+Pr_3}$. It suffices to show that for any $\cm{T}\in\mathcal{S}(r_1,r_2,r_3)$, there exists a $\cm{\overline{T}}\in\overline{T}$ such that $\|\cm{T}-\cm{\overline{T}}\|_{\text{F}}\leq\epsilon.$

	For any fixed $\cm{T}\in\mathcal{S}(r_1,r_2,r_3)$, decompose it by HOSVD as $\cm{T}=[\![\cm{G};\bm{U}_1,\bm{U}_2,\bm{U}_3]\!]$. Then, there exist $\cm{\overline{T}}=[\![\cm{\overline{G}};\bm{\overline{U}}_1,\bm{\overline{U}}_2,\bm{\overline{U}}_3]\!]$with $\cm{\overline{G}}\in\overline{G}$, $\bm{\overline{U}}_i\in\overline{O}_{n_i,r_i}$ satisfying that $\|\bm{U}_i-\bm{\overline{U}}_i\|_{1,2}\leq\epsilon/4$ and $\|\cm{G}-\cm{\overline{G}}\|_{\text{F}}\leq\epsilon/4$. This gives
	\begin{equation*}\begin{split}
	\|\cm{T}-\cm{\overline{T}}\|_{\text{F}}&\leq\|[\![\cm{G}-\cm{\overline{G}};\bm{U}_1,\bm{U}_2,\bm{U}_3]\!]\|_{\text{F}}+\|[\![\cm{\overline{G}};\bm{U}_1-\bm{\overline{U}}_1,\bm{U}_2,\bm{U}_3]\!]\|_{\text{F}}\\
	&+\|[\![\cm{\overline{G}};\bm{\overline{U}}_1,\bm{U}_2-\bm{\overline{U}}_2,\bm{U}_3]\!]\|_{\text{F}}+\|[\![\cm{\overline{G}};\bm{\overline{U}}_1,\bm{\overline{U}}_2,\bm{U}_3-\bm{\overline{U}}_3]\!]\|_{\text{F}}.
	\end{split}\end{equation*}

	Since each $\bm{U}_i$ is an orthonormal matrix, the first term is $\|\cm{G}-\cm{\overline{G}}\|_{\text{F}}\leq \epsilon/4$. For the second term, by the all-orthogonal property of $\cm{\overline{G}}$ and the orthonormal property of $\bm{U}_2$ and $\bm{U}_3$,
	\begin{equation*}
	\|[\![\cm{\overline{G}};\bm{U}_1-\bm{\overline{U}}_1,\bm{U}_2,\bm{U}_3]\!]\|_{\text{F}}
	=\|\cm{\overline{G}}\times_1(\bm{U}_1-\bm{\overline{U}}_1)\|_{\text{F}}\leq\|\cm{\overline{G}}\|_{\text{F}}\|\bm{U}_1-\bm{\overline{U}}_1\|_{2,1}\leq\epsilon/4.
	\end{equation*}
	Similarly, we can obtain the upper bound for the third and the last term, and thus show that $\|\cm{T}-\cm{\overline{T}}\|_{\text{F}}\leq\epsilon$.

\end{proof}

\begin{lemma} \label{lemma:RSC} (Restricted strong convexity) Suppose that $T\gtrsim NP$, with probability at least $1-C\exp[-c\log(NP)]$,
	\begin{equation*}
	\alpha_{\textup{RSC}}\|\bm{\Delta}\|_{\textup{F}}^2\leq\frac{1}{T}\sum_{t=1}^T\|\bm{\Delta}_{(1)}\bm{x}_t\|,
	\end{equation*}
	where $\alpha_{\text{RSC}}=\lambda_{\min}(\bm{\Sigma}_{\bm{e}})/(2\mu_{\max}(\mathcal{A}))$.
\end{lemma}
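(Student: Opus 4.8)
The plan is to establish the restricted strong convexity (RSC) condition by showing that the empirical quantity $T^{-1}\sum_{t=1}^T\|\bm{\Delta}_{(1)}\bm{x}_t\|_2^2$ concentrates around its population counterpart $\bm{\Delta}_{(1)}\bm{\Sigma}_{\bm{x}}\bm{\Delta}_{(1)}^\top$ uniformly over all relevant directions $\bm{\Delta}$. First I would note that since $\bm{\Delta}$ has Tucker ranks at most $(2r_1,2r_2,2r_3)$, it suffices to prove the bound uniformly over the normalized set $\mathcal{S}(2r_1,2r_2,2r_3)$, i.e.\ to lower bound $\inf_{\scriptsize\cm{W}\in\mathcal{S}(2r_1,2r_2,2r_3)} T^{-1}\sum_{t=1}^T\|\cm{W}_{(1)}\bm{x}_t\|_2^2$. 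For the population term, using the spectral measure of dependence \citep[Proposition 2.3]{basu2015regularized} as already invoked in the proof of Theorem~\ref{thm:errorbound}, one has $\lambda_{\min}(\bm{\Sigma}_{\bm{x}})\geq\lambda_{\min}(\bm{\Sigma}_{\bm{e}})/\mu_{\max}(\mathcal{A})$, which immediately gives $\|\cm{W}_{(1)}\bm{\Sigma}_{\bm{x}}^{1/2}\|_{\textup{F}}^2\geq\lambda_{\min}(\bm{\Sigma}_{\bm{e}})/\mu_{\max}(\mathcal{A})$ for any unit-Frobenius-norm $\cm{W}$, explaining the factor $\alpha_{\textup{RSC}}=\lambda_{\min}(\bm{\Sigma}_{\bm{e}})/(2\mu_{\max}(\mathcal{A}))$ with the extra factor $1/2$ absorbing the deviation term.

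Next I would control the deviation $\sup_{\scriptsize\cm{W}\in\mathcal{S}(2r_1,2r_2,2r_3)}\bigl|T^{-1}\sum_t\|\cm{W}_{(1)}\bm{x}_t\|_2^2-\cm{W}_{(1)}\bm{\Sigma}_{\bm{x}}\cm{W}_{(1)}^\top\bigr|$. Writing $\bm{m}_t=\bm{\Sigma}_{\bm{x}}^{-1/2}\bm{x}_t$ reduces this to a uniform deviation bound for a quadratic form in the $\beta$-mixing, mean-zero, identity-covariance sequence $\{\bm{m}_t\}$, over the transformed index set. The strategy is the standard one-step discretization argument already used in the main proof: fix an $\epsilon$-net $\overline{\mathcal{S}}(2r_1,2r_2,2r_3)$ of the low-multilinear-rank set, whose cardinality is bounded by Lemma~\ref{lemma:covering} as $(12/\epsilon)^{O(r_1r_2r_3+Nr_1+Nr_2+Pr_3)}$; control the quadratic form at each net point via a concentration inequality for quadratic forms of $\beta$-mixing sequences (the same mixing lemma, Lemma~\ref{lemma:beta_mixing}, that yields $\mathbb{P}[\|\bm{m}^{(j)}\|_2^2/T\geq4]\leq C\sqrt{T}\exp(-c\sqrt{T})$); and pass from the net to the full set using the splitting-into-$8$-orthogonal-pieces device from Figure~\ref{fig:tucker8} together with a Cauchy--Schwarz bound, exactly as in the proof of Theorem~\ref{thm:errorbound}. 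Taking $\epsilon$ a small constant (e.g.\ $1/10$) and using the condition $T\gtrsim NP\gtrsim r_1r_2r_3+Nr_1+Nr_2+Pr_3$ makes the log-cardinality term dominated, so the deviation is at most $\lambda_{\min}(\bm{\Sigma}_{\bm{e}})/(2\mu_{\max}(\mathcal{A}))$ with probability at least $1-C\exp[-c\log(NP)]$.

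Combining the population lower bound with the deviation upper bound yields $T^{-1}\sum_t\|\bm{\Delta}_{(1)}\bm{x}_t\|_2^2\geq\alpha_{\textup{RSC}}\|\bm{\Delta}\|_{\textup{F}}^2$ on the stated event, which is the claim. The main obstacle I anticipate is the concentration step for the quadratic form under dependence: unlike the linear functional $\langle\bm{e}_t\circ\bm{M}_t,\cm{W}\rangle$ handled in the main proof (which is Gaussian conditionally on the $\bm{M}_t$), here the randomness $\bm{m}_t$ enters quadratically and is only $\beta$-mixing, so one must invoke a Hanson--Wright-type bound adapted to mixing sequences (via blocking/coupling to an independent block sequence as in \citealp{basu2015regularized}) and carefully track that the resulting tail is of the form $C\sqrt{T}\exp(-c\sqrt{T})$ rather than the sub-exponential $\exp(-cT)$ one would get in the i.i.d.\ case --- this is precisely why the final probability statement carries an $\exp(-C\sqrt{T})$ term and why the sample-size requirement is stated as $T\gtrsim NP$ rather than the parameter count alone. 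Everything else is bookkeeping with the Kronecker/unfolding identities from the Tucker preliminaries.
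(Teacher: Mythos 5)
Your skeleton matches the paper's: write the empirical quadratic form as $\bm{\delta}^\top(\bm{\widehat{\Gamma}}\otimes\bm{I}_N)\bm{\delta}$ with $\bm{\delta}=\text{vec}(\bm{\Delta}_{(1)})$, lower-bound the population part by $\lambda_{\min}(\bm{\Gamma})\geq\lambda_{\min}(\bm{\Sigma}_{\bm{e}})/\mu_{\max}(\mathcal{A})$ via the spectral measure, and control $\sup|\bm{\delta}^\top[(\bm{\widehat{\Gamma}}-\bm{\Gamma})\otimes\bm{I}_N]\bm{\delta}|$ uniformly over $\mathcal{S}(2r_1,2r_2,2r_3)$ by an $\epsilon$-net with the cardinality bound of Lemma \ref{lemma:covering}, absorbing the deviation into the factor $1/2$. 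The one substantive divergence is the pointwise concentration tool, and that is where your plan has a gap. The paper does not go through a mixing/blocking argument here at all: it cites Proposition 2.4 of \citet{basu2015regularized}, a deviation inequality for quadratic forms of the stationary Gaussian process, which gives $\mathbb{P}\left(|\bm{v}^\top[(\bm{\widehat{\Gamma}}-\bm{\Gamma})\otimes\bm{I}_N]\bm{v}|>2\pi\mathcal{M}(f_{\bm{X}})\eta\right)\leq 2\exp[-cT\min(\eta,\eta^2)]$, i.e.\ a tail exponential in $T$; the union bound over the net then costs only $\exp[d\log(12/\epsilon)]$ with $d=r_1r_2r_3+Nr_1+Nr_2+Pr_3$, so $T\gtrsim d$ (up to dependence constants) suffices and the failure probability is $\exp(-cd)$, with no $\sqrt{T}$ loss.

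Your route instead tries to manufacture this concentration from Lemma \ref{lemma:beta_mixing}, and this creates two problems. First, that lemma only concentrates $\|\bm{z}\|_2^2$ for a single scalar sequence; to control $\bm{v}^\top(\bm{\widehat{\Gamma}}-\bm{\Gamma})\bm{v}$ for an arbitrary unit vector $\bm{v}$ you need a Hanson--Wright-type statement for the vector process, which you correctly name as the main obstacle but do not supply --- and it is precisely the step the paper sidesteps by exploiting Gaussianity. Second, even granting such an extension, the blocking tail is of order $\exp(-c\sqrt{T})$, so the union bound over a net of size $(12/\epsilon)^{O(d)}$ would force $\sqrt{T}\gtrsim d$, i.e.\ $T\gtrsim d^2$, a strictly stronger sample-size requirement than the $T\gtrsim NP$ in the statement. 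Relatedly, your diagnosis that the $\exp(-C\sqrt{T})$ term in Theorem \ref{thm:errorbound} originates from this RSC lemma is a misattribution: in the paper that term comes from bounding $\max_j\|\bm{m}^{(j)}\|_2^2/T$ in the stochastic cross-term, which is where Lemma \ref{lemma:beta_mixing} is genuinely used; the RSC lemma itself contributes only a failure probability of $\exp(-cd)$.
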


\begin{proof}
	Denote $\bm{X}=(\bm{x}_1,\dots,\bm{x}_{T})$, $\bm{\widehat{\Gamma}}=\bm{XX}^\top/T$, $\bm{\Gamma}=\mathbb{E}\bm{\widehat{\Gamma}}$ and $\bm{\delta}=\text{vec}(\bm{\Delta}_{(1)})$. Note that
	\begin{equation*}
	T^{-1}\sum_{t=1}^T\|\bm{\Delta}_{(1)}\bm{x}_{t}\|_2^2=T^{-1}\|\bm{\Delta}_{(1)}\bm{X}\|_{\text{F}}^2=T^{-1}\|(\bm{X}^\top\otimes\bm{I}_{N})\bm{\delta}\|_2^2=\bm{\delta}^\top[(\bm{X}\bm{X}^\top/T)\otimes\bm{I}_{N}]\bm{\delta}.
	\end{equation*}
	Thus, the objective is to show $\delta^\top[(\bm{X}\bm{X}^\top/T)\otimes\bm{I}_{N}]\delta$ is lower bounded away from zero.

	Since $\bm{\widehat{\Gamma}}=\bm{\Gamma}+(\bm{\widehat{\Gamma}}-\bm{\Gamma})$
	and by spectral measure, we have $\lambda_{\min}(\bm{\Gamma})\geq\lambda_{\min}(\bm{\Sigma}_{\bm{e}})/\mu_{\max}(\mathcal{A})$. Then, it suffices to show that $\bm{\delta}^\top[(\bm{\widehat{\Gamma}}-\bm{\Gamma})\otimes\bm{I}_N]\bm{\delta}$ does not deviate much from zero for any $\bm{\Delta}\in\mathcal{S}(2r_1,2r_2,2r_3)$.

	By Proposition 2.4 in \citet{basu2015regularized}, for any single vector $\bm{v}\in\mathbb{R}^{N^2P-1}$ such that $\|\bm{v}\|_2=1$, any $\eta>0$,
	\begin{equation*}
	\mathbb{P}\left(|\bm{v}^\top[(\bm{\widehat{\Gamma}}-\bm{\Gamma})\otimes\bm{I}_N]\bm{v}|>2\pi\mathcal{M}(f_{\bm{X}})\eta\right)<2\exp[-cT\min(\eta,\eta^2)].
	\end{equation*}
	Then, we can extend this deviation bound to the union bound on the set $\mathcal{S}(2r_1,2r_2,2r_3)$. By Lemma \ref{lemma:covering}, for $\mathcal{S}(2r_1,2r_2,2r_3)$, we can construct a $\epsilon$-net of cardinality at most $(12/\epsilon)^{r_1r_2r_3+Nr_1+Nr_2+Pr_3}$ and approximate the deviation on this net, which yields that for some $\kappa>1$,
	\begin{equation*}\begin{split}
	&\mathbb{P}\left[\sup_{\bm{\Delta}\in\mathcal{S}(2r_1,2r_2,2r_3)}|\bm{\delta}^\top[(\bm{\widehat{\Gamma}}-\bm{\Gamma})\otimes\bm{I}_N]\bm{\delta}|>2\pi\kappa\mathcal{M}(f_{\bm{X}})\eta\right]\\
	\leq&2\exp[-cT\min(\eta,\eta^2)+(r_1r_2r_3+Nr_1+Nr_2+Pr_3)\log(12/\epsilon)].
	\end{split}\end{equation*}
	Then, we can set $\eta=[\lambda_{\min}(\bm{\Sigma}_{\bm{e}})]/[4\pi\kappa\mathcal{M}(f_{\bm{X}})\mu_{\min}(\mathcal{A})]<1$, and then we obtain that for $T\gtrsim (r_1r_2r_3+Nr_1+Nr_2+Pr_3)[(\lambda_{\max}(\bm{\Sigma}_{\bm{e}})\mu_{\max}(\mathcal{A}))/(\lambda_{\min}(\bm{\Sigma}_{\bm{e}})\mu_{\min}(\mathcal{A}))]^2$,
	\begin{equation*}
	\mathbb{P}\left[\sup_{\bm{\Delta}\in\mathcal{S}(2r_1,2r_2,2r_3)}\bm{\delta}^\top(\bm{\widehat{\Gamma}}\otimes\bm{I}_N)\bm{\delta}\leq\frac{\lambda_{\min}(\bm{\Sigma}_{\bm{e}})}{2\mu_{\max}(\mathcal{A})}\right]
	\leq C\exp[-c(r_1r_2r_3+Nr_1+Nr_2+Pr_3)].
	\end{equation*}
	Therefore,
	\begin{equation*}
	\mathbb{P}\left[T^{-1}\sum_{t=1}^T\|\bm{\Delta}_{(1)}\bm{x}_{t}\|_2^2\geq\frac{\lambda_{\min}(\bm{\Sigma}_{\bm{e}})}{2\mu_{\max}(\mathcal{A})}\|\bm{\Delta}\|_{\text{F}}^2\right]\leq C\exp[-c(r_1r_2r_3+Nr_1+Nr_2+Pr_3)].
	\end{equation*}
\end{proof}

The following Lemma is the concentration of $\beta$-mixing subgaussian random variables in \citet{wong2019lasso}.
\begin{lemma}
	\label{lemma:beta_mixing}
	Let $\bm{z}=(Z_1,\dots,Z_T)$ consist of a sequence of mean-zero random variables with exponentially decaying $\beta$-mixing coefficients, i.e. there exists some constant $c_\beta>0$ such that $\forall l\geq1$, $\beta(l)\leq\exp(-c_\beta l)$. Let $K$ be such that $\max_{t=1}^T\|Z_t\|_{\psi_2}\leq\sqrt{K}$. Choose a block length $a_T\geq1$ and let $\mu_T=\lfloor T/(2a_T)\rfloor$. We have, for any $t>0$,
	\begin{equation*}\begin{split}
	\mathbb{P}\left(\frac{1}{T}\left|\|\bm{z}\|_2^2-\mathbb{E}\|\bm{z}\|_2^2\right|>t\right)&\leq4\exp\left(-C{\min}\left\{\frac{t^2\mu_T}{K^2},\frac{t\mu_T}{K}\right\}\right)\\
	&+2(\mu_T-1)\exp(-c_\beta a_T)+\exp\left(-\frac{2t\mu_T}{K}\right).
	\end{split}\end{equation*}
\end{lemma}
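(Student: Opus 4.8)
The plan is to establish the inequality by the classical Bernstein blocking scheme together with a coupling lemma for $\beta$-mixing sequences. Write $X_t=Z_t^2-\mathbb{E}Z_t^2$, so that the quantity of interest is $T^{-1}\bigl|\sum_{t=1}^T X_t\bigr|$ and each $X_t$ has mean zero. Since $\|Z_t\|_{\psi_2}\le\sqrt{K}$, the square $Z_t^2$ is sub-exponential with $\|Z_t^2\|_{\psi_1}=\|Z_t\|_{\psi_2}^2\le K$, hence $\|X_t\|_{\psi_1}\lesssim K$; note that this bound on a single Orlicz norm uses no independence.

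First I would partition $\{1,\dots,T\}$ into consecutive blocks of length $a_T$, labelled alternately as ``heads'' $H_1,\dots,H_{\mu_T}$ and ``tails'' $\mathcal{T}_1,\dots,\mathcal{T}_{\mu_T}$, with a possible shorter remainder block $R$, so that $\sum_{t=1}^T X_t=\sum_{j=1}^{\mu_T}S_j^H+\sum_{j=1}^{\mu_T}S_j^{\mathcal{T}}+S_R$ where $S_j^H=\sum_{t\in H_j}X_t$, and similarly for the tails. The triangle inequality for $\|\cdot\|_{\psi_1}$ gives $\|S_j^H\|_{\psi_1}\lesssim a_T K$, and likewise for the tails. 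Consecutive head-blocks are separated by a gap of length $a_T$ (the intervening tail-block), so Berbee's coupling lemma yields independent random vectors $\widetilde{H}_1,\dots,\widetilde{H}_{\mu_T}$ with $\widetilde{H}_j$ equal in distribution to $H_j$ and $\mathbb{P}(H_j\ne\widetilde{H}_j\text{ for some }j)\le(\mu_T-1)\beta(a_T)\le(\mu_T-1)\exp(-c_\beta a_T)$; the analogous statement holds for the tails.

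On the coupling event the head contribution equals $\widetilde{S}^H=\sum_{j=1}^{\mu_T}\widetilde{S}_j^H$, a sum of $\mu_T$ independent mean-zero sub-exponential variables, each with $\psi_1$-norm $\lesssim a_T K$. Bernstein's inequality for independent sub-exponential summands then gives $\mathbb{P}(|\widetilde{S}^H|>s)\le 2\exp(-C\min\{s^2/(\mu_T a_T^2 K^2),\,s/(a_T K)\})$; choosing $s\asymp tT\asymp t a_T\mu_T$ converts the exponent into $\min\{t^2\mu_T/K^2,\,t\mu_T/K\}$, which is precisely the advertised rate. The same bound holds for the tails, while the short remainder $S_R$ (length below $2a_T$, hence $\|S_R\|_{\psi_1}\lesssim a_T K$) contributes the lower-order term $\exp(-2t\mu_T/K)$ through the linear regime of Bernstein's inequality. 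Adding the two Bernstein bounds, the two coupling errors, and the remainder term by a union bound delivers the claim.

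The step I expect to be the main obstacle is the bookkeeping around the coupling: one must check that replacing heads (and, independently, tails) by independent copies propagates the sub-exponential parameters so that the final tail exponent scales with the number of blocks $\mu_T$ rather than with $T$, and that the off-coupling probability $(\mu_T-1)\exp(-c_\beta a_T)$ and the leftover partial block are absorbed without degrading the rate. The remaining ingredients---the sub-exponential norm of $Z_t^2$, the Orlicz triangle inequality, and Bernstein's inequality---are standard; this is essentially the argument given in \citet{wong2019lasso}.
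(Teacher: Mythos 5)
The paper offers no proof of this lemma at all: it is imported verbatim from \citet{wong2019lasso}, and the proof in that reference is exactly the blocking--coupling--Bernstein argument you sketch (sub-exponential $Z_t^2$, alternating head/tail blocks of length $a_T$, Berbee coupling contributing $2(\mu_T-1)\beta(a_T)$, Bernstein's inequality over the $\mu_T$ independent block sums, and a separate linear-regime tail bound for the leftover partial block). Your proposal is therefore correct in substance and takes essentially the same route as the cited source; the only loose end is bookkeeping, e.g.\ your argument recovers the final term $\exp(-2t\mu_T/K)$ only up to an unspecified absolute constant in the exponent rather than the explicit factor $2$.
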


\section{Real Dataset Information}

The US macroeconomics dataset is provided by \citet{koop2013forecasting}. The dataset includes a list of 40 quarterly macroeconomic variables of the United States, from Q1-1959 to Q4-2007. All series are transformed to be stationary as in Table 1, standardized to have zero mean and unit variance, and seasonally adjusted except for financial variables. These forty macroeconomic variables capture many aspects of the economy (e.g. production, price, interest rate, consumption, labor, stock markets and exchange rates) and many empirical econometric literature have applied VAR model to these data for structural analysis and forecasting \citep{koop2013forecasting}.

\begin{landscape}
	\begin{table}[]
		\small
		\centering
		\caption{Forty quarterly macroeconomic variables. Except for financial variables, variables are seasonally adjusted. All variables are transformed to stationarity with the following transformation codes. The code represents: 1 = no transformation, 2 = first difference, 3 = second difference, 4 = log, 5 = first difference of logged variables, 6 = second difference of logged variables.}
		\label{my-label}
		\begin{tabular}{@{}llllll@{}}
			\toprule
			Short name & Code & Description                                    & Short name & Code & Description                                     \\ \midrule
			GDP251     & 5    & Real GDP, quantity index (2000=100)            & SEYGT10    & 1    & Spread btwn 10-yr and 3-mth T-bill rates        \\[-2ex]
			CPIAUCSL   & 6    & CPI all items                                  & HHSNTN     & 2    & Univ of Mich index of consumer expectations     \\[-2ex]
			FYFF       & 2    & Interest rate: federal funds (\% per annum)    & PMI        & 1    & Purchasing managers' index                      \\[-2ex]
			PSCCOMR    & 5    & Real spot market price index: all commodities  & PMDEL      & 1    & NAPM vendor deliveries index (\%)               \\[-2ex]
			FMRNBA     & 3    & Depository inst reserves: nonborrowed (mil\$)  & PMCP       & 1    & NAPM commodity price index (\%)                 \\[-2ex]
			FMRRA      & 6    & Depository inst reserves: total (mil\$)        & GDP256     & 5    & Real gross private domestic investment          \\[-2ex]
			FM2        & 6    & Money stock: M2 (bil\$)                        & LBOUT      & 5    & Output per hr: all persons, business sec        \\[-2ex]
			GDP252     & 5    & Real Personal Cons. Exp., Quantity Index       & PMNV       & 1    & NAPM inventories index (\%)                     \\[-2ex]
			IPS10      & 5    & Industrial production index: total             & GDP263     & 5    & Real exports                                    \\[-2ex]
			UTL11      & 1    & Capacity utilization: manufacturing (SIC)      & GDP264     & 5    & Real imports                                    \\[-2ex]
			LHUR       & 2    & Unemp. rate: All workers, 16 and over (\%)     & GDP 265    & 5    & Real govt cons expenditures \& gross investment \\[-2ex]
			HSFR       & 4    & Housing starts: Total (thousands)              & LBMNU      & 5    & Hrs of all persons: nonfarm business sector     \\[-2ex]
			PWFSA      & 6    & Producer price index: finished goods           & PMNO       & 1    & NAPM new orders index (\%)                      \\[-2ex]
			GDP273     & 6    & Personal Consumption Exp.: price index         & CCINRV     & 6    & Consumer credit outstanding: nonrevolving       \\[-2ex]
			CES275R    & 5    & Real avg hrly earnings, non-farm prod. workers & BUSLOANS   & 6    & Comm. and industrial loans at all comm. banks   \\[-2ex]
			FM1        & 6    & Money stock: M1 (bil\$)                        & PMP        & 1    & NAPM production index (\%)                      \\[-2ex]
			FSPIN      & 5    & S\&P's common stock price index: industrials   & GDP276\_1  & 6    & Housing price index                             \\[-2ex]
			FYGT10     & 2    & Interest rate: US treasury const. mat., 10-yr  & GDP270     & 5    & Real final sales to domestic purchasers         \\[-2ex]
			EXRUS      & 5    & US effective exchange rate: index number       & GDP253     & 5    & Real personal cons expenditures: durable goods  \\[-2ex]
			CES002     & 5    & Employees, nonfarm: total private              & LHEL       & 2    & Index of help-wanted ads in newspapers         \\[-1ex] \bottomrule
		\end{tabular}
	\end{table}
\end{landscape}

\bibliography{CAN}

\end{document}